\def\eqref#1{equation~\ref{#1}}
\def\1{\bm{1}}
\DeclareMathAlphabet{\mathsfit}{\encodingdefault}{\sfdefault}{m}{sl}
\SetMathAlphabet{\mathsfit}{bold}{\encodingdefault}{\sfdefault}{bx}{n}
\setlist[itemize]{leftmargin=5.5mm}
\setlist[enumerate]{leftmargin=5.5mm}
\newcommand{\jointset}{M}
\newcommand{\measureset}{m}
\newcommand{\name}{\texttt{InPose}\xspace}
\newcommand*\circled[1]{\tikz[baseline=(char.base)]{%
    \node[shape=circle,fill=black,draw,inner sep=1pt,text=white] (char) {#1};}}
\title{Zero-shot Human Pose Estimation using \\ Diffusion-based Inverse solvers}
\author{Sahil Bhandary Karnoor \& Romit Roy Choudhury \\
University of Illinois at Urbana-Champaign\\
Champaign, IL 61820, USA \\
\texttt{\{sahilb5,croy\}@illinois.edu}
}
\begin{document}

\maketitle

\begin{abstract}
Pose estimation refers to tracking a human's full body posture, including their head, torso, arms, and legs.
The problem is challenging in practical settings where the number of body sensors is limited.
Past work has shown promising results using conditional diffusion models, where the pose prediction is conditioned on both $\langle$location, rotation$\rangle$ measurements from the sensors. 
Unfortunately, nearly all these approaches generalize poorly across users, primarily because location measurements are highly influenced by the body shape of the user.
In this paper, we formulate pose estimation as an inverse problem and design an algorithm capable of zero-shot generalization.
Our idea utilizes a pre-trained diffusion model and conditions it on rotational measurements alone; the priors from this model are then guided by a likelihood term, derived from the measured locations. 
Thus, given any user, our proposed {\name} method generatively estimates the highly likely sequence of poses that best explains the sparse on-body measurements.
\end{abstract}

\section{Introduction}

Human pose estimation is a crucial piece for numerous applications, 
The results have steadily improved \cite{vaehmd, vposer} with a recent boost from generative models (e.g., conditional diffusion models \cite{BoDiffusion}) that predicted the user's full-body pose from just $3$ sensors. 
Unfortunately, such proposed generative techniques have a significant limitation; they don't generalize well across users with varying body shapes. 
A generative model trained on data from a single user can't be used by a user with a different body shape without fine-tuning. 
Authors in \cite{flag} try to overcome this issue by jointly training over both pose datasets and varying body shapes, but this increases model complexity, and there is no guarantee that all possible body shapes were accounted for during training.
An algorithm that generalizes even to body shape outliers would be ideal.

In this paper, we propose \name, a diffusion-based method that implicitly accounts for the user's body shape without requiring any fine-tuning. 
Our core observation is that any human's full-body pose can be decomposed into a "scale-free pose" and a "scale-dependent" component.  
For human poses, the scale-free pose can be imagined as a template human body whose skeletal joints (e.g., shoulders, elbows, hip, knees, etc.) are rotated appropriately to create a given pose.
The scale-dependent component is the joint locations in 3D space.
Forward kinematics relates the scale-free pose, along with the body shape, to the scale-dependent component.
Since the sensors give $\langle$rotation, location$\rangle$ measurements from $3$ body joints, it is possible to estimate a distribution of scale-free poses from rotational measurements alone.
Then, the location measurements can be used to sharpen this distribution to poses that best explain the measurements.
This decomposition lends itself to an inverse problem formulation, shown visually in Fig. \ref{fig:formulation}a.
Using $\langle$rotation, location$\rangle$ measurements from $3$ body joints---head and two wrists---{\name} aims to track the locations of all $22$ body joints, necessary to fully define the 3D pose of a human.

\begin{figure}[t]
    \centering
    \includegraphics[width=0.4\textwidth]{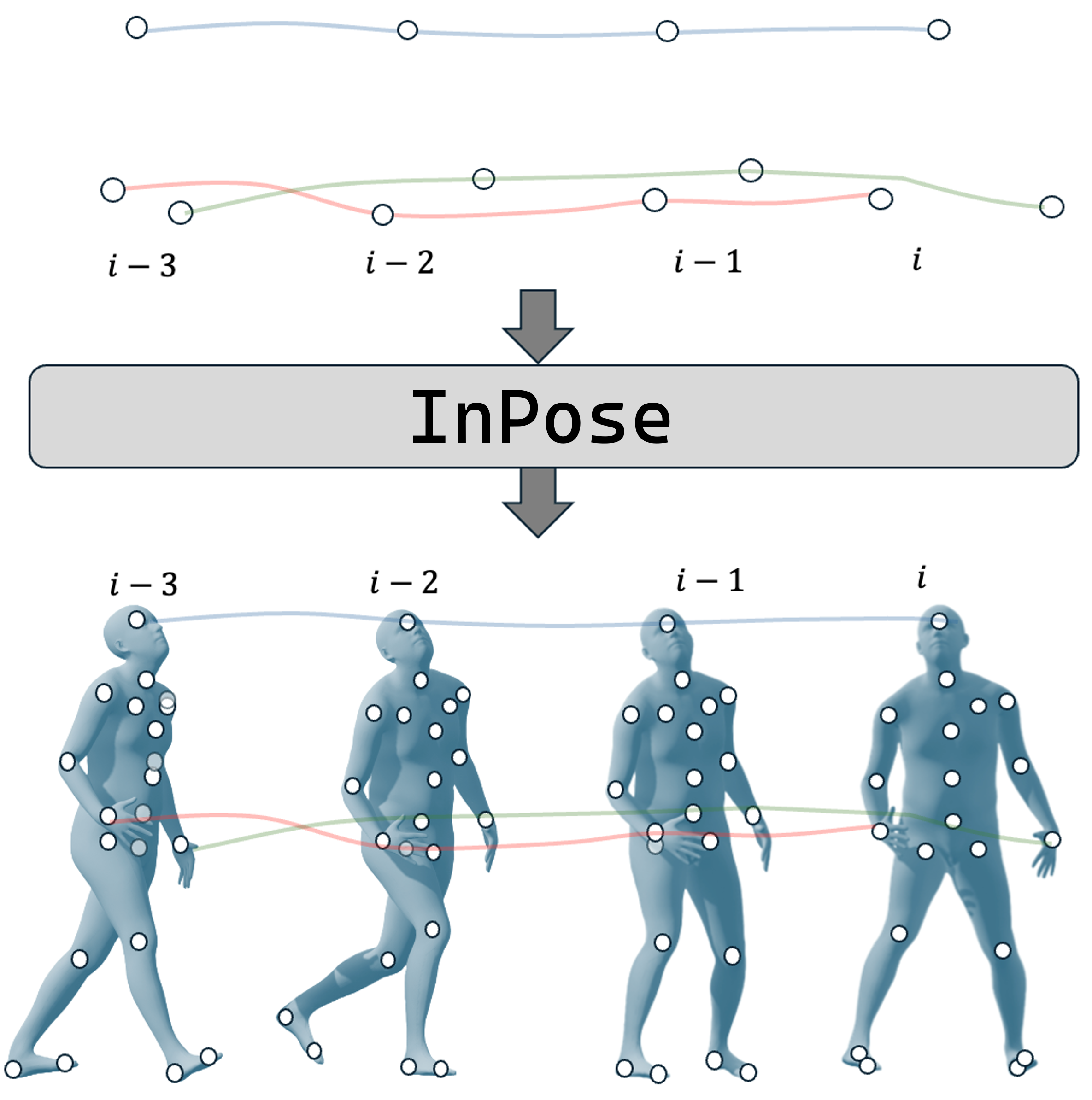} 
    \hfill
    \includegraphics[width=0.58\textwidth]{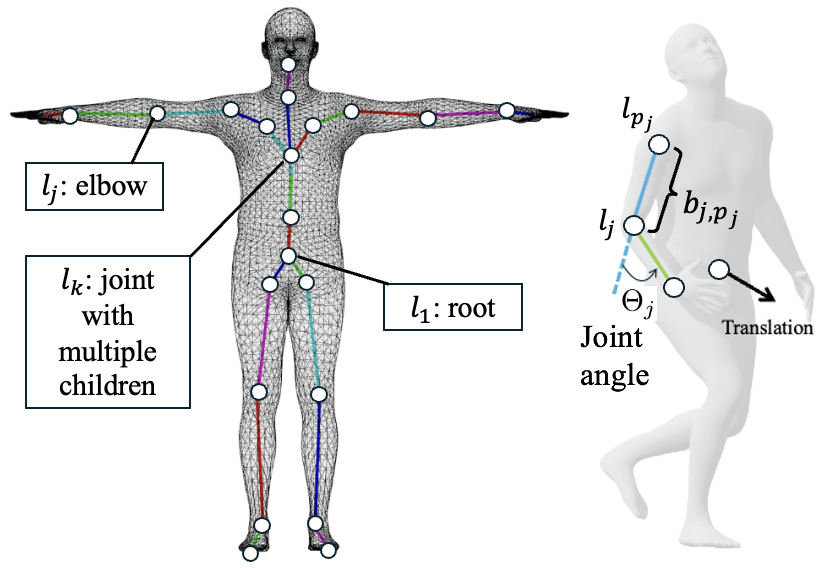} 
    \caption{\small{(a) {\name}'s input and output visualized over $4$ time frames. (b) ``T'' pose. (c) Pose with a depiction of the rotation angle and root translation.}}
    \vspace{-0.15in}
    \label{fig:formulation}
\end{figure}

{\name}'s inverse problem formulation can be sketched as follows.
We train a Diffusion model conditioned on \textit{rotational measurements} from existing datasets; this gives us a conditional prior over scale-free poses.
When inferring a specific user's pose, we use the user's body shape to scale the scale-free pose, and compare it against \textit{location measurements} to estimate the likelihood of the pose.
This likelihood term requires propagating a Gaussian random variable through a nonlinear operator.
We prove that this propagation can be approximated by a Gaussian and use the likelihood as an inverse kinematics guidance term to guide the diffusion denoising process.
The denoised result is a sequence of full-body poses---samples from the posterior---that best explains the 3-point measurements for that specific user.
Through extensive experiments, we demonstrate promising generalization across a wide range of body shapes and shapes on the AMASS dataset \cite{amass}.

\section{Model and Measurement}
\label{sec:model}

\textbf{Body Model:} Following the conventional SMPL framework \cite{SMPL}, we model the human body as a graph (Fig. \ref{fig:formulation}b).
The vertices of this graph are the $22$ main joints in the human skeleton; the edges are the bones connecting these joints.
The 3D coordinates of the joints (in a global reference frame) are denoted as \(l_{j} \in \mathbb{R}^{3}, j\in\{1,..22\}\). 
The bone that connects adjacent joints \(l_{j},l_{k}\) are denoted by a vector \(b_{j,k} \in \mathbb{R}^{3}\) of fixed length $|b_{jk}|$.
Every joint \(l_{j}\) has a unique parent, \(l_{p_{j}}\). 
The whole joint-tree has a root joint \(l_{1}\) located at the pelvis. 



The {\em global pose} of a body is fully defined by the $22$ joint locations in a global reference frame.
Fig.\ref{fig:formulation}b shows a ``T'' pose and Fig.\ref{fig:formulation}c shows a running pose. 
Intuitively, a global pose can be computed in three steps.
(1) Start with a standard ``T'' pose with the human located at the origin of a global reference frame.
(2) Move the root joint $l_1$ to bring the human to its correct location; the human is still in the ``T'' pose, but the whole body is displaced. 
(3) Now, starting from joint $l_1$, rotate each joint based on the joint angles. 
Perform this \textit{sequentially down the joint tree}, ensuring a parent joint $p_j$ has been rotated before rotating joint $j$.
These $3$ steps bring us to the human's global pose.

Eq. \ref{eqn:pos_rot} models the above steps to compute joint $j$'s global location at time frame $i$.
\begin{align}
l_{j}(i) &= l_{p_{j}}(i) + R_{p_j}(i) \cdot b_{j,p_{j}} 
    \label{eqn:pos_rot}
\end{align}
Here $R_{p_j}(i)$ is a global 3D rotation matrix of the parent joint.
Note that the global 3D rotation matrix for any joint $j$ is computed as $R_{j}(i) = R_{p_j}(i) \cdot \Theta_{j}(i)$, where $\Theta_j(i)$ is the \textit{local} 3D rotation matrix shown in Fig.\ref{fig:formulation}c.
Since \(\Theta_{j}(i)\) is represented as 3D rotation matrices\footnote{Other representations are possible, including axis-angle, Euler, or quaternion representation.}, all the joint angles in the ``T'' pose are identity matrices.
As the human performs different poses, {\name} aims to track the root location $l_1(i)$ and global rotation \(R_{j}(i)\) for each of the joints.

\textbf{Joint Angle Representation:}
While representing $R_j(i)$ using 3D rotation matrices makes it easy to compute joint locations, \cite{6dof} has shown that representing them instead using the 6D parameterization \(r_{j}(i)\in\mathbb{R}^{6}\) is better for neural network training. 
This is due to the continuity properties of this representation, unlike others such as quaternions or axis-angle\footnote{Rotation matrices may also be used, but since they must be made unitary, output pose quality is affected.}.
The forward mapping for vector $r_{j}(i)$ is computed as:
\begin{align*}
    r_{j}(i) = [R_{j}^{(1,1)}(i) \;~~~ R_{j}^{(2,1)}(i) \;~~~ R_{j}^{(3,1)}(i) \;~~~ R_{j}^{(1,2)}(i) \;~~~ R_{j}^{(2,2)}(i) \;~~~ R_{j}^{(3,2)}(i)]^{\top}
\end{align*}
where \(R_{j}^{(k,l)}(i)\) is the \(\{k,l\}^{\text{th}}\) element of the corresponding 3D rotation matrix. There also exists a non-linear differentiable inverse \(\bar{\mathcal{D}}:\mathbb{R}^{6} \rightarrow\mathbb{R}^{3\times 3}\) that maps the 6D representation to rotation matrices (defined in Appendix \ref{sec:appendix_thm}).
Hence, Eq.\ref{eqn:pos_rot} becomes: $l_{j}(i) = l_{p_{j}}(i) + \bar{\mathcal{D}}(r_{p_j}(i)) \cdot b_{j,p_{j}}$. We extend \(\bar{\mathcal{D}}\) to map all \(|\jointset|\) rotations from 6D to rotation matrices, and term this function \(\mathcal{D}\).

\textbf{Measurements:} 
To align with recent work in this area \cite{avatarjlm, BoDiffusion}, we use the same AMASS dataset, which contains locations and rotation angles of the head and two wrists.
These measurements can be obtained from VR goggles and handheld controllers, which use a combination of ego-centric cameras, IMU sensors, visual SLAM algorithms, and dead reckoning methods to estimate $\measureset$ locations and rotations in the global reference frame, where $\measureset \subset \jointset = \{1,...22\}$ and $|\measureset|=3$.
\vspace{-0.05in}

\section{{\name}: Inverse Zero-Shot Pose Estimation}

\subsection{Formulation}
We denote the noisy signal measurements from the $3$ sensor joints as $y_{\measureset}(i)=[l_{\measureset}(i), r_{\measureset}(i)]$.
Here \(i\) indexes the measurement time frames and is dropped throughout the rest of the paper unless specified.
$l_{\measureset}(i) = l^{+}_{\measureset}(i) + \sigma_lv(i)$ is the noisy location measurement, where \(l^{+}_{\measureset}\) is the noise-free joint location, and $v(i)$ is iid Gaussian noise.
Similarly, the rotation $r_{\measureset}(i)$ is also noisy.
Our goal is to predict $r_{\jointset}$, which are the $22$ global joint rotations and $l_1$, the root's translation.
We are also provided the user's bone lengths \(b_{j,p_j}\).
With such sparse measurements, this is an ill-posed pose estimation problem.

As in \cite{avatarposer}, we simplify this problem by first assuming the root joint is stationary.
We estimate the scale-free pose defined by \(r_{\jointset}\); then scale to the correct pose defined by \(l_{\jointset}\); and then \textit{drag} this pose until the head's location matches the measured head location, \(l_{\text{head}}\).
From this, we infer the root translation \(l_1\).
Thus, the core question boils down to sampling from the posterior $p(r_{\jointset}|y_{\measureset})$.


Diffusion models have recently found remarkable success for these types of posterior sampling problems.
They were originally proposed as a tool for sampling from a prior distribution \(p_0(x^0)\).
This is done by first defining a noising process \(p_t(x^t)\) by injecting iid Gaussian noise of standard deviation \(\sigma_t\) into it, where \(t\in\{0:T\}\).
Diffusion models aim to reverse this noising process by learning the score function \(\nabla_{x_t}log \; p_t(x_t)\).

In our scenario, we require the conditional score $\nabla_{r^{t}_{\jointset}} log\;p_t(r^t_{\jointset}|y_{\measureset})$.
One method is to use Classifier-Free Guidance (CFG) proposed by \cite{cfg}. 
In this formulation, a conditional diffusion model is trained to accept all the inputs $y_{\measureset}=[l_{\measureset}, r_{\measureset}]$ for conditioning.
Most previous works \cite{BoDiffusion, diff-poser} use this approach and are unable to support zero-shot generalization.
This is because the (noisy) location measurements \(l_{\measureset}\) vary based on body shape---if two people are in the same pose, their joints would share identical rotation angles, \textit{but because of differences in bone lengths across varying body shapes, we see from Eq. \ref{eqn:pos_rot} that the joint locations \(l_j\) will be different.} 
Thus, a conditional model trained on one user's data does not generalize well to another. 

To overcome this, we split the conditional score \(\nabla_{r^{t}_{\jointset}} log\;p_t(r^t_{\jointset}|y_{\measureset})=\nabla_{r^{t}_{\jointset}} log\;p_t(r^t_{\jointset}|\{l_{\measureset},r_{\measureset}\})\) using Bayes' rule:
\begin{align}
    \nabla_{r^{t}_{\jointset}} log\;p_t(r^t_{\jointset}|\{l_{\measureset},r_{\measureset}\}) &= \nabla_{r^{t}_{\jointset}} log\;p_t(r^t_{\jointset}|r_{\measureset}) + \nabla_{r^{t}_{\jointset}} log\;p_t(l_{\measureset}|r^t_{\jointset},r_{\measureset}) + 0 \\
    &= \nabla_{r^{t}_{\jointset}} log\;p_t(r^t_{\jointset}|r_{\measureset}) + \nabla_{r^{t}_{\jointset}} log\;p_t(l_{\measureset}|r^t_{\jointset})
 \end{align}
where we have assumed \(l_{\measureset}\) and \(r_{\measureset}\) are \textbf{conditionally independent}.

The conditional score \(\nabla_{r^{t}_{\jointset}} log\;p_t(r^t_{\jointset}|r_{\measureset})\) is scale-free, and can be learned by a CFG-based conditional diffusion model.
The scale-dependent likelihood score \(\nabla_{r^{t}_{\jointset}} log\;p_t(l_{\measureset}|r^t_{\jointset})\) can be utilized as a guidance to the prior \cite{dps, ddrm, diem,dsg}.
This guidance is performed during inference and \textit{does not require any training or fine-tuning of the generative neural network.}
We use the Pseudoinverse-Guidance for Diffusion Models (\(\Pi\)GDM) \cite{pi-gdm} framework, but propose a mechanism to propagate a Gaussian random variable through the non-linear inverse $\mathcal{D}(.)$ function inside the likelihood term (discussed soon).
This mathematically enables the decomposition of the user's pose into a general scale-free pose (from the prior) and a user-specific scaling factor (captured in the likelihood).
Thus, our main contribution over past work---and the key to enabling zero-shot pose-prediction---is to perform CFG using \textit{only} the measured rotations, and use \textit{only} the joint locations as a pseudoinverse guidance to that CFG.
\vspace{-0.05in}

\subsection{Designing the Likelihood, Prior, and Posterior Terms}
\label{sec:posteriorterms}

We now describe the various score terms used by our algorithm at each diffusion timestep.

\textbf{CFG Prior:}
We train a CFG-based score model \(\epsilon_{\theta}(r^t_{\jointset}, t, r_{\measureset})\) to determine the conditional score \(\nabla_{r^{t}_{\jointset}} log\;p_t(r^t_{\jointset}|r_{\measureset})\) as a function of the noisy rotation inputs \(r_{\measureset}\) from the 3-point sensors.
This is then used to derive a conditionally denoised estimate \(\hat{r}^t_{\jointset}\) using Tweedie's formula \cite{tweedie_orig}:
\begin{align}
    \hat{r}^t_{\jointset} = \frac{r^{t}_{\jointset}-\sqrt{1-\bar{\alpha}_t}\epsilon_{\theta}(r^t_{\jointset}, t, r_{\measureset})}{\sqrt{\bar{\alpha}_t}}
\end{align}
We adapt the same DiT \cite{DiT} transformer architecture used in BoDiffusion \cite{BoDiffusion} but modify it appropriately (details in Section \ref{sec:experiments}) since we are only conditioning on rotation $r_m$, while BoDiffusion conditioned on both $r_m$ and $l_m$.

\textbf{Likelihood score:}
To compute the likelihood score \(\nabla_{r^{t}_{\jointset}} log\;p_t(l_{\measureset}|r^t_{\jointset})\), we need to relate the joint rotations to joint locations.
Mathematically, we aim to minimize the likelihood $||l_{\measureset}-\mathcal{A}\circ{\mathcal{D}}(\hat{r}^t_{\jointset})||_2$
where \(\mathcal{A}\circ{\mathcal{D}}(\cdot)\) is the \textit{measurement} operator.
Recall, \(\mathcal{D}(.)\) converts 22 joint rotations from the 6D vectors to rotation matrices, and \(\mathcal{A}\) is a linear function that uses these rotation matrices to determine the joint location estimates.
Eq. \ref{eqn:pos_rot} had earlier shown the operation of \(\mathcal{A}\) for a single joint, where $R_{j}(i) = \mathcal{D}\big(r_{j}(i)\big)$.

Unfortunately, two issues stand in the way of estimating the likelihood.
\circled{1} Since $r_M^t$ is a noisy estimate of $r_M$, we cannot pass it through the measurement operator to obtain $\hat{l}_m$.
To mitigate this, we adopt ideas from \(\Pi\)GDM \cite{pi-gdm} to help approximate $r_M^t$ as a Gaussian distribution.
\circled{2} Since $\mathcal{D}(.)$ is a non-linear function, propagating the approximated Gaussian random variable through $\mathcal{D}(.)$ is problematic.
We will prove that this propagation can also be approximated by a Gaussian, giving us a pathway to the final solution.
Let us briefly review \(\Pi\)GDM first and then visit the second step in {\name}.

\fbox{\begin{minipage}{\textwidth}
\textit{\(\Pi\)GDM Recap}: 
Consider the general problem where we are given observations 
\(z=Ax^0+\sigma_zn\), where \(A\) is the measurement model, \(n\) is unit Gaussian noise, and \(\sigma_z\) is the noise variance.
Say we would like to estimate \(x^0\), for which we will guide a diffusion model that is denoising a noisy $x^t$ at each diffusion time step.
This guidance needs to use the likelihood score $\nabla_{x^t} \log p_t(z|x^t)$, which needs to be computed through an intermediate step of marginalization over $x^0$ as follows:
\begin{align}
    p_t(z|x^t) &= \int p(z|x^0)p_t(x^0|x^t)\text{d}x^0 
\end{align}
If \(A\) is a linear function and the noise \(n\) is Gaussian, then \(p(z|x^0)\sim\mathcal{N}(Ax^0,\sigma_z\text{I})\).
For the second term  \(p_t(x^0|x^t)\), $\Pi$GDM proposes to approximate this distribution as \(\mathcal{N}(\hat{x}^t,w_t^2\text{I})\), where the mean comes from a regular diffusion step.
Hence, the distribution \(p_t(z|x^t)\) and the corresponding likelihood score can both be approximated by a Gaussian as follows:
\begin{align}
p_t(z|x^t) &\approx \mathcal{N}(A\hat{x}^t,w_t^2AA^{\top}+\sigma_z\text{I})
\label{eq:zxtgauss} 
\end{align}
\vspace{-0.25in}
\begin{align}
\nabla_{x^t}\text{log }p_t(z|x^t) &\approx ((z-A\hat{x}^{t})^{\top}(w_{t}^2AA^{\top} + \sigma^2_z\text{I})^{-1}A\frac{\partial \hat{x}^t}{\partial x^t})^{\top} \label{eqn:pigdm}
\end{align}
\end{minipage}}

Let us now return to {\name}.
We cannot directly apply $\Pi$GDM to our likelihood score \(\nabla_{r^{t}_{\jointset}} log\;p_t(l_{\measureset}|r^t_{\jointset})\) since our measurement operator contains the $\mathcal{D}(.)$ function.
But if $\mathcal{D}(.)$ is ignored---meaning that the rotation matrix $R^t_M$ is somehow available---then the measurement operator in Eq. \ref{eqn:pos_rot} becomes linear.
When \(l_1=0\), the joint location $l_j$ becomes a matrix-vector product, \(C\kappa\), by traversing the joint tree as follows:
\begin{align}  [R_1...R_{p_j}]\cdot[b^{\top}_{2,1}...b^{\top}_{j,p_j}]^{\top}=l_j
\label{eqn:a_notran}
\end{align}
where \(C = [R_1...R_{p_j}], \text{and} \;\kappa=[b^{\top}_{2,1}...b^{\top}_{j,p_j}]^{\top}\).
The matrix-vector product can be rearranged to form \((\text{I}_3 \otimes\kappa^{\top})\cdot\text{vec}(C)\) where \(\otimes\) is the Kronecker product.
We can thus obtain our linear function \(\mathcal{A}\coloneqq\text{I}_3 \otimes\kappa^{\top}\).
Plugging this \(\mathcal{A}\) into Eq \ref{eqn:pigdm} gives us the likelihood score. 

Unfortunately, the $\mathcal{D}(.)$ function is non-linear in {\name}, hence the conditional distribution \(p_t(z|x^t)\) in Eq. \ref{eq:zxtgauss} is no longer Gaussian. 
However, using the following Theorem, we show that it is possible to approximate \(p_t(l_{\measureset}|r^t_{\jointset})\) as a Gaussian and compute its covariance matrix with a well-trained score model \(\epsilon_{\theta}\) (proof in Appendix \ref{sec:appendix_thm}).

\begin{restatable}{theorem}{fta}
\label{thm:gaussian_proof}
We are given a well-trained score model \(\epsilon_{\theta}\), that learns the score function \(\epsilon_t\leftarrow \epsilon_{\theta}(r^t_{\jointset},t,r_{\measureset})\), and denoises \(\hat{r}^t_{\jointset} \leftarrow \frac{r^t_{\jointset}-\sqrt{1-\bar{\alpha}_t}\epsilon_{t}}{\sqrt{\bar{\alpha}_t}}\). 
If the model ensures that \(||\hat{r}^{t,1:3}_{j}||=||\hat{r}^{t,4:6}_{j}||=1, \; \langle \hat{r}^{t,1:3}_{j}, \hat{r}^{t,4:6}_{j}\rangle=0,\;\forall j\in\jointset\) then \(p_t(\mathcal{D}(r^0_{\jointset})|r^t_{\jointset})\approx \mathcal{N}(\mathcal{D}(\hat{r}^t_{\jointset}), w_t^2\Sigma_{\hat{r}^t_{\jointset}})\) where \(\Sigma_{\hat{r}^t_{\jointset}}\) is a positive definite matrix.
\end{restatable}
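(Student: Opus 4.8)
The plan is to obtain the Gaussian approximation by pushing the $\Pi$GDM Gaussian for $r^0_{\jointset}$ through the map $\mathcal{D}$ and linearizing. First I would invoke the $\Pi$GDM modeling assumption from the recap, namely $p_t(r^0_{\jointset}\mid r^t_{\jointset})\approx\mathcal{N}(\hat r^t_{\jointset},\,w_t^2\mathrm{I})$, with the Tweedie mean $\hat r^t_{\jointset}$ supplied by the well-trained model $\epsilon_\theta$. The target distribution $p_t(\mathcal{D}(r^0_{\jointset})\mid r^t_{\jointset})$ is then the pushforward of this Gaussian under the smooth nonlinear map $\mathcal{D}$, so the natural tool is the delta method: Taylor-expand $\mathcal{D}$ to first order about the mean $\hat r^t_{\jointset}$,
\begin{align*}
\mathcal{D}(r^0_{\jointset})\approx \mathcal{D}(\hat r^t_{\jointset}) + J_{\mathcal{D}}(\hat r^t_{\jointset})\,(r^0_{\jointset}-\hat r^t_{\jointset}),
\end{align*}
where $J_{\mathcal{D}}$ is the Jacobian of $\mathcal{D}$. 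Since an affine image of a Gaussian is Gaussian, this immediately yields $p_t(\mathcal{D}(r^0_{\jointset})\mid r^t_{\jointset})\approx \mathcal{N}\big(\mathcal{D}(\hat r^t_{\jointset}),\,w_t^2\,J_{\mathcal{D}}J_{\mathcal{D}}^\top\big)$, so I would set $\Sigma_{\hat r^t_{\jointset}}:=J_{\mathcal{D}}J_{\mathcal{D}}^\top$. The accuracy of the linearization is controlled by $w_t$ being small relative to the curvature of $\mathcal{D}$, together with the smoothness of $\bar{\mathcal{D}}$, which I would read off from its explicit Gram--Schmidt form.

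Next I would compute $J_{\mathcal{D}}$ explicitly. Because $\mathcal{D}$ acts block-diagonally across joints (each $\bar{\mathcal{D}}$ depends only on its own $6$ coordinates), $J_{\mathcal{D}}$ is block-diagonal and it suffices to analyze a single joint's map $\bar{\mathcal{D}}:\mathbb{R}^6\to\mathbb{R}^{3\times3}$. Writing the input as two $3$-vectors $(a_1,a_2)=(\hat r^{t,1:3}_j,\hat r^{t,3:6}_j)$, the construction produces $c_1=a_1/\|a_1\|$, $c_2\propto a_2-(c_1^\top a_2)c_1$, and $c_3=c_1\times c_2$. The hypotheses $\|a_1\|=\|a_2\|=1$ and $a_1^\top a_2=0$ are precisely the conditions under which the two normalizing denominators equal $1$ and the projection term $c_1^\top a_2$ vanishes, so the base point already lies on the Stiefel manifold and the differentials of the normalization and orthogonalization steps collapse into closed-form projection operators (of the type $\mathrm{I}-a\,a^\top$). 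I would differentiate each step at this point and assemble the per-joint block of $J_{\mathcal{D}}$.

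The main obstacle is the concluding claim that $\Sigma_{\hat r^t_{\jointset}}$ is positive definite, and this is where the orthonormality hypotheses do the real work. First, they guarantee differentiability at the evaluation point, since the denominators $\|a_1\|$ and $\|a_2-(c_1^\top a_2)c_1\|$ are bounded away from zero and no Gram--Schmidt singularity is hit. Second, I would argue that under these conditions the per-joint Jacobian attains its maximal rank, so each covariance block is non-degenerate and the block-diagonal $\Sigma_{\hat r^t_{\jointset}}$ inherits strict positivity. The subtlety I expect to have to address carefully is a dimension mismatch: $\bar{\mathcal{D}}$ maps $6$ parameters onto the $3$-dimensional rotation manifold $SO(3)\subset\mathbb{R}^{3\times3}$, so $J_{\mathcal{D}}J_{\mathcal{D}}^\top$ expressed in the full ambient matrix coordinates is rank-deficient; positive definiteness must therefore be understood in the effective coordinates parametrizing the output, equivalently on the tangent space of the rotation manifold at $\mathcal{D}(\hat r^t_{\jointset})$, where the orthonormality conditions ensure the pushed-forward covariance is strictly positive. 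Pinning down that the conditions exactly annihilate the degenerate input directions (the radial directions of $a_1,a_2$ and the $a_1$-component of $a_2$) while leaving the remaining directions non-degenerate is the crux of the argument.
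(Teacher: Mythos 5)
Your overall strategy---push the $\Pi$GDM Gaussian $p_t(r^0_{\jointset}\mid r^t_{\jointset})\approx\mathcal{N}(\hat r^t_{\jointset},w_t^2\mathrm{I})$ through $\mathcal{D}$ and read off a Gaussian for the image---matches the paper's starting point, but the linearization step is where your argument breaks and cannot be repaired in the form you propose. The per-joint map $\bar{\mathcal{D}}:\mathbb{R}^6\to\mathbb{R}^{3\times3}$ has image inside $SO(3)$, a $3$-dimensional manifold, so its Jacobian has rank at most $3$ and the delta-method covariance $w_t^2 J_{\mathcal{D}}J_{\mathcal{D}}^{\top}$ is a $9\times 9$ matrix of rank at most $3$: it is necessarily singular, never positive definite. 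You notice this yourself and retreat to ``positive definiteness on the tangent space,'' but that proves a different statement from the theorem, whose $\Sigma_{\hat r^t_{\jointset}}$ is an ambient-coordinate matrix that is subsequently sandwiched as $\mathcal{A}\Sigma_{\hat r^t_{\jointset}}\mathcal{A}^{\top}$ in the likelihood score. The missing idea is that the positive definiteness comes precisely from the terms the delta method throws away.

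The paper's proof proceeds differently on both halves of the output. For the first two columns it uses the hypotheses $\|\hat r^{t,1:3}_j\|=\|\hat r^{t,4:6}_j\|=1$ and orthogonality to treat the Gram--Schmidt steps as the identity, so those six entries keep the full covariance $w_t^2\mathrm{I}_6$ (not the rank-deficient projected covariance $\mathrm{I}-aa^{\top}$ your linearization would produce). For the third column, which is the cross product $r^{0,1:3}_j\times r^{0,4:6}_j$, it computes the \emph{exact} first and second moments of the quadratic entries (e.g.\ $\mathrm{Var}[r^{0,3}_jr^{0,4}_j-r^{0,1}_jr^{0,6}_j]=w_t^2(2w_t^2+(\hat r^{t,3}_j)^2+(\hat r^{t,4}_j)^2+(\hat r^{t,1}_j)^2+(\hat r^{t,6}_j)^2)$) and all cross-covariances, then moment-matches a Gaussian. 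The $2w_t^4$ contributions---second-order in the input noise and hence invisible to a first-order Taylor expansion---are exactly what make the last three leading principal minors equal $2w_t^2$, $(2w_t^2)^2$, and $(2w_t^2)^3$, so Sylvester's criterion gives strict positive definiteness of the full $9\times 9$ block. Without those terms the Schur complement of the third-column block degenerates and the conclusion fails. So your proposal needs to replace the Jacobian pushforward with an explicit second-moment computation of the bilinear cross-product map (or otherwise retain the $O(w_t^4)$ covariance contributions) before the positive-definiteness claim can go through.
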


From the proof of Theorem \ref{thm:gaussian_proof}, we obtain the covariance matrix for \(\mathcal{D}(\hat{r}^t_{\jointset})\) as \(\widetilde{\text{Cov}}(\mathcal{D}(\hat{r}^t_{\jointset})) = w_t^2\Sigma_{\hat{r}^t_{\jointset}}\). 
Using this approximation, we get \(p_t(\mathcal{D}(r^0_{\jointset})|r^t_{\jointset})\approx \mathcal{N}(\mathcal{D}(\hat{r}^t_{\jointset}), w_t^2\Sigma_{\hat{r}^t_{\jointset}})\), and thus 

{
\setlength{\abovedisplayskip}{-10pt} 
\setlength{\belowdisplayskip}{1pt} 
\begin{align}
     \nabla_{r^t_{\jointset}}\log p_t(l_{\measureset}|r^t_{\jointset}) = ((l_{\measureset}-\mathcal{A}\cdot\mathcal{D}(\hat{r}^t_{\jointset}))^{\top}(w_{t}^2\mathcal{A}\Sigma_{\hat{r}^t_{\jointset}}\mathcal{A}^{\top} + \sigma^2_l\text{I})^{-1}\mathcal{A}\frac{\partial \mathcal{D}(\hat{r}^t_{\jointset})}{\partial r^t_{\jointset}})^{\top}
\end{align}
}
\vspace{-0.25in}

\begin{figure}[t]
    \centering
    \includegraphics[width=1\linewidth]{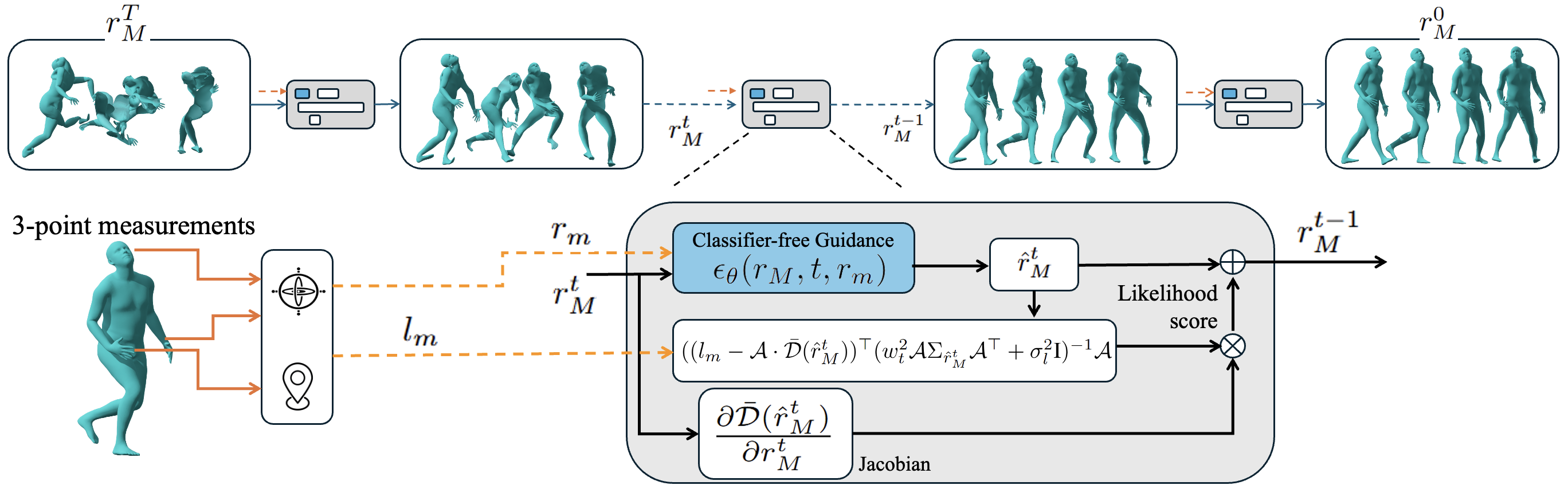}
    \caption{\small{{\name} pipeline: 3-point sensor rotation + location measurements are inputs. Rotations are fed to the CFG score model, which outputs a conditional prior; location measurements estimate the likelihood, which is used to steer diffusion.}}
    \vspace{-0.05in}
    \label{fig:NeuPose}
\end{figure}

\subsection{Accounting for Translation: Differential Parameterization}
\label{sec:differential}

From Eq. \ref{eqn:pos_rot}, we see that all joint locations at frame \(i\) have an additive dependence on \(l_1(i)\) due to the kinematic chain:
\vspace{-0.1in}
\begin{align}
    l_j(i) = \sum_{k=3}^{j} (l_{p_{k}}(i) + R_{p_k}(i) \cdot b_{k,p_{k}}) + R_{1}(i) \cdot b_{2,1} + l_1(i)
\end{align}
\vspace{-0.1in}

The mapping \(\mathcal{A}\) derived from Eq. \ref{eqn:a_notran} is only valid if \(l_1(i)=0\).
To enable the linear inverse guidance formulation when \(l_1(i)\neq 0\), we use the difference between the positional measurements from each of the 3 measured joints at every \(i^{\text{th}}\) frame.
Thus, the contribution of the root translation \(l_1(i)\) for each of the measured joint locations gets canceled.
However, lower body motion estimation remains limited because there is no direct translational guidance.
\name relies heavily on the prior learned by the diffusion model to estimate lower body motion.

For body shapes similar to the default shape, we can use the head translation as input to the CFG model, enabling us to estimate lower body motion.
We perform experiments both with and without head translation as a CFG-based input.

\subsection{Model Pipeline}
Figure \ref{fig:NeuPose} summarizes the {\name} pipeline. 
Its objective can be summarized as performing guided diffusion to infer a sequence of human poses \(r_{\jointset}\), using a combination of conditioning CFG inputs and Pseudoinverse guidance utilizing a modified \(\Pi\)GDM likelihood score. 
The inputs to the algorithm are the noisy joint rotations \(r_{j\in\measureset}\) and locations \(l_{j\in\measureset}\) of a subset of joints \(\measureset\) of size \(3\). 

At each diffusion step $t$, {\name}'s workflow can be summarized in the following steps:
\begin{itemize}
    \item Use the CFG score function \(\nabla_{r^{t}_{\jointset}} log\;p_t(r^t_{\jointset}|r_{\measureset})\) conditioned on noisy rotation inputs \(\{r_{\measureset}\}\) (\(\epsilon_{\theta}\)) to generate a conditionally denoised estimate \(\hat{r}^t_{\jointset}\).
    \item Use \(\Pi\)GDM to estimate the likelihood score \(\nabla_{r^t_{\jointset}} \log p_t(l_{\measureset}|r^t_{\jointset})\).
    \item Combine the conditionally denoised estimate and the likelihood score using modified DDIM to generate the diffusion output for the next step \(r^{t-1}_{\jointset}\).
    The proposed {\name} algorithm is described in Algorithm \ref{alg:neupose}.
\end{itemize}

\begin{algorithm}[hbt]
\caption{{\name} Inference using modified \(\Pi\)GDM}\label{alg:neupose}
\begin{algorithmic}
\Require $ N, \epsilon_{\theta}, \eta\in[0,1], \mathcal{A}, \mathcal{D}(\cdot) $ \\
\textbf{Inputs:} \(y_{\measureset}=[l_{\measureset}, r_{\measureset}] ,\sigma_l\)
\State Find a sequence of timesteps \(q_{i\in{0..N}}\) with \(q_0=0\) and \(q_N=T\)
\State Initialize \(r_{\jointset}\sim\mathcal{N}(0,\text{I})\)
\For{$i \in \{N..1\}$}
    \State \(t \gets q_i, \quad s\gets q_{i-1}\) \Comment{Get start and end times}
    \State \(\bar{\alpha}_t \gets \frac{1}{1+\sigma_t^2}\) \Comment{Get \(\alpha\) for VP-SDE}
    \State \colorbox{teal!30}{\parbox{0.923\textwidth}{
    \(\epsilon_t \gets \epsilon_{\theta}(r_{\jointset},t,r_{\measureset})\) \\
    \(\hat{r}^t_{\jointset} = \frac{r_{\jointset}-\sqrt{1-\bar{\alpha}_t}\epsilon_t}{\sqrt{\bar{\alpha}_t}}\)    \Comment{Denoised output at current iteration}}}
    \State \(c_1 \gets \eta\sqrt{(1-\frac{\bar{\alpha}_t}{\bar{\alpha}_s}) \frac{1-\bar{\alpha}_s}{1-\bar{\alpha}_t}}\) \Comment{Constants for DDIM}
    \State \(c_2 \gets \sqrt{1-\bar{\alpha}_s-c_1^2}\)
    \State \colorbox{orange!30}{\parbox{0.923\textwidth}{
    \(w_t^2\Sigma_{\hat{r}^t_{\jointset}} \gets \widetilde{\text{Cov}}(\hat{r}^t_{\jointset})\) \\
    \(g \gets ((l_{\measureset}-\mathcal{A}\cdot\mathcal{D}(\hat{r}^t_{\jointset}))^{\top}(w_{t}^2\mathcal{A}\Sigma_{\hat{r}^t_{\jointset}}\mathcal{A}^{\top} + \sigma^2_l\text{I})^{-1}\mathcal{A}\frac{\partial \mathcal{D}(\hat{r}^t_{\jointset})}{\partial r^t_{\jointset}})^{\top}\) \Comment{Likelihood score}}}
    \State \colorbox{purple!30}{\parbox{0.923\textwidth}{
    Sample \(\epsilon \sim \mathcal{N}(0,\text{I})\) \\
    \(r_{\jointset} \gets \sqrt{\bar{\alpha}_s}\hat{r}^t_{\jointset} + c_1\epsilon+c_2\epsilon_t+\sqrt{\bar{\alpha}_t}g\) \Comment{Posterior update}}}
\EndFor \\
\Return \(r_{\jointset}\) \Comment{Return Estimated Pose sequence}
\end{algorithmic}

\end{algorithm}

\vspace{-0.05in}

{\bf A pertinent question} one may ask is as follows.
Given that the user's body shape parameters are available during inference, why not scale the default-body dataset with these body shape parameters?
Said differently, applying Eq. \ref{eqn:pos_rot}, the scale-free joint rotations \(r_M\) can be scaled---using the available body shape parameters---to regenerate locations \(l_M\).
This new dataset can then be used to train a CFG model, obviating the need for inverse solvers like {\name}.
Unfortunately, this is possible only if \(l_1\) was known (or equal to $0$).
Otherwise, the mapping between \(r_M\) to \(l_1\) is non-trivial.
As an illustration, consider that the user jumps.
Without modeling the dynamics of the human body, it is hard to determine the user's displacement while airborne.
This is why pure CFG models are unable to generalize across body shapes, while {\name}'s inverse guidance formulation does not require new datasets from new users.

\section{Experiments}
\label{sec:experiments}
\vspace{-0.1in}

\textbf{Datasets:}
All our experiments were performed on AMASS \cite{amass}, an aggregate of multiple human-pose datasets and the de facto standard for pose estimation/generation today.
The data is in the SMPL body model format.
Each dataset within AMASS consists of multiple samples, each a sequence of poses at $60$, $100$, or $120$Hz; we resample all data to $60$Hz.
A fixed default body shape typical of the average male is used for model training.
Our experiments follow two dataset protocols, as per our BoDiffusion baseline \cite{BoDiffusion}:
\begin{enumerate}
    \item The Transitions \cite{amass} and the HumanEVA \cite{humaneva} datasets within AMASS were used for testing, while others were used for training.
    \item An approximately 90\%/10\% split for training and testing respectively, on the CMU \cite{AMASS_CMU}, BMLrub \cite{AMASS_BMLrub}, and the HDM05 \cite{AMASS_HDM05} datasets.
\end{enumerate}

\textbf{Baselines:}
We choose the two following SOTA algorithms as baselines.
These models accept the 3-point joint locations \(l_{\measureset}(i)\), rotations \(r_{\measureset}(i)\), and the corresponding velocity and angular velocity as inputs; they output the full-body pose of the user for which each was trained.
Both the velocity and the angular velocity are computed as linear functions of \(l_{\measureset}\) and \(r_{\measureset}\).
The rotation and angular velocities are represented in the 6D representation.
\vspace{-0.05in}
\begin{itemize}
    \item {\bf AvatarJLM} \cite{avatarjlm} is a conventional neural network-based approach.
    \item {\bf BoDiffusion} \cite{BoDiffusion} is the CFG-based diffusion model that we adopt for \name. In the original BoDiffusion paper, it outputs the local joint angles \(\Theta_{\jointset}(i)\). The authors provide a pretrained model, denoted BoDiffusion(Local), in all our results.
    \item {\bf BoDiffusion(Global):} We modified BoDiffusion(Local) to output global joint angles \(r_{\jointset}\). This is also evaluated using both \(l_{\measureset}\) and  \(r_{\measureset}\) for CFG and is termed BoDiffusion(Global). 
\end{itemize}

\textbf{Implementation Details:}
We fine-tune the neural network used in BoDiffusion and perform inference using \(N=50\) steps.
The original BoDiffusion algorithm implements a DiT \cite{DiT} based denoiser in a CFG score model framework.
Additional details are provided in Appendix \ref{sec:implement} \footnote{We provide the code on our website \url{https://iclrinpose-crypto.github.io/ICLRInPose/} for reproducability.}.

\textbf{Evaluation Metrics:}
We use $4$ standard metrics from the literature to evaluate the models: 
\vspace{-0.07in}
\begin{itemize}
\item {\bf Mean Per Joint Position (location) Error(MPJPE)} measures the mean joint location error, in cm, across all joints and poses in the sequence.
\item {\bf Mean Per Joint Rotation Error(MPJRE)} measures the mean joint rotation error, in degrees, across all joints and poses. 
MPJPE captures the scale-dependent error, while MPJRE captures the scale-free error.
\item {\bf UPE, LPE:} We also report the joint position error, in cm, for the upper and lower body separately, respectively.
These two tell us how well the model can infer upper body versus leg movement, given that all measurement sensors are placed on the upper body. 
\end{itemize}

\vspace{-0.1in}

\subsection{Results}
\label{sec:results}
\textbf{Zero-shot generalization across body shape scaling:}
Fig. \ref{fig:generalizeAndNoise}(a,b) presents results when the models are trained on a default body shape, and then tested for various body shape scaling factors (including the default).
body shapes are varied by changing the scaling factor on the $X$ axis (a value greater than $1.0$ on the $X$ axis indicates a proportionally taller human, and vice versa).
Note, all bones of the taller person (or shorter) human have been scaled up (or down) by the same factor (we also report separate results where different parts of the body are scaled differently).
The root joint translation is also proportional to this scaling factor.
The results are performed using Protocol 1.
We report location and rotation errors (MPJPE and MPJRE). 
\textit{Importantly, we divide the estimated MPJPE by the scaling factor};
The MPJRE is obviously scale-free.

The baselines outperform \name in the default case when scale equals $1.0$.
However, both the scaled MPJPE and the MPJRE (Fig. \ref{fig:generalizeAndNoise}a and \ref{fig:generalizeAndNoise}b) remain almost flat for {\name} regardless of body shape.
This demonstrates the zero-shot nature of our inverse solver, in contrast to the significant degradation observed in the baselines.

\begin{figure}[hbt]
    \centering
    \vspace{-0.08in}
    \includegraphics[width=0.32\linewidth]{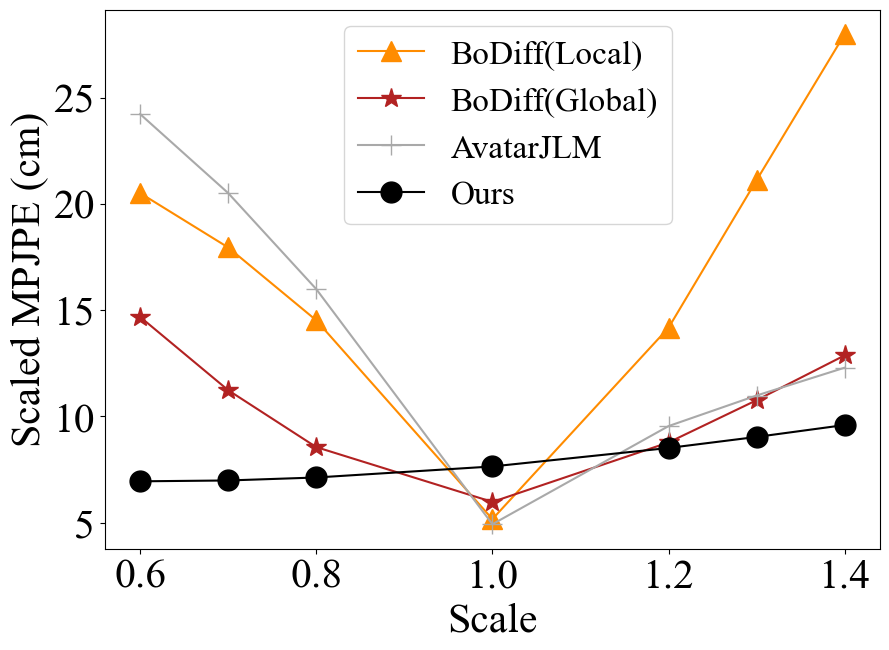} \hfill
    \includegraphics[width=0.32\linewidth]{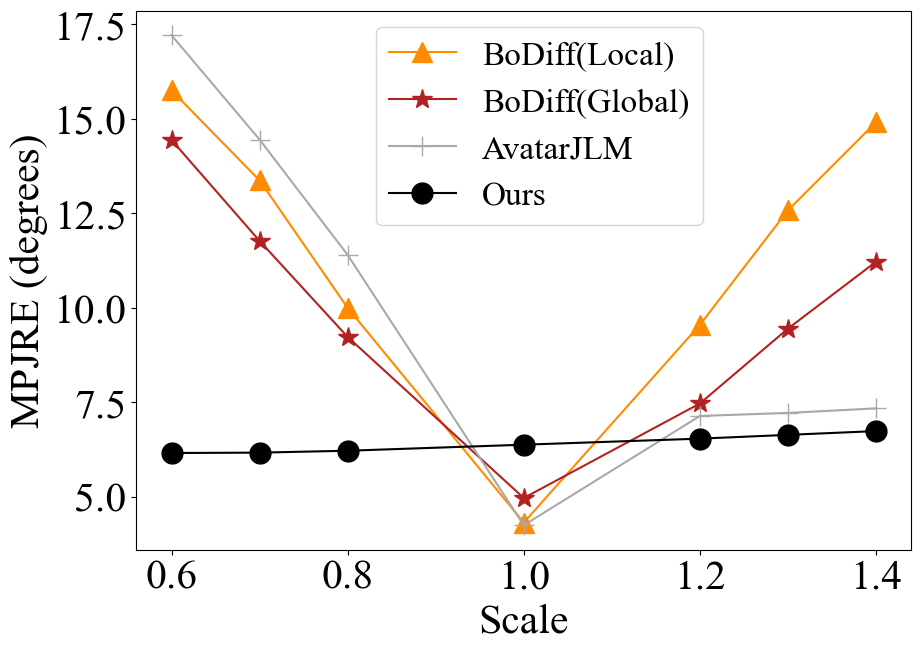} \hfill 
    \includegraphics[width=0.32\linewidth]{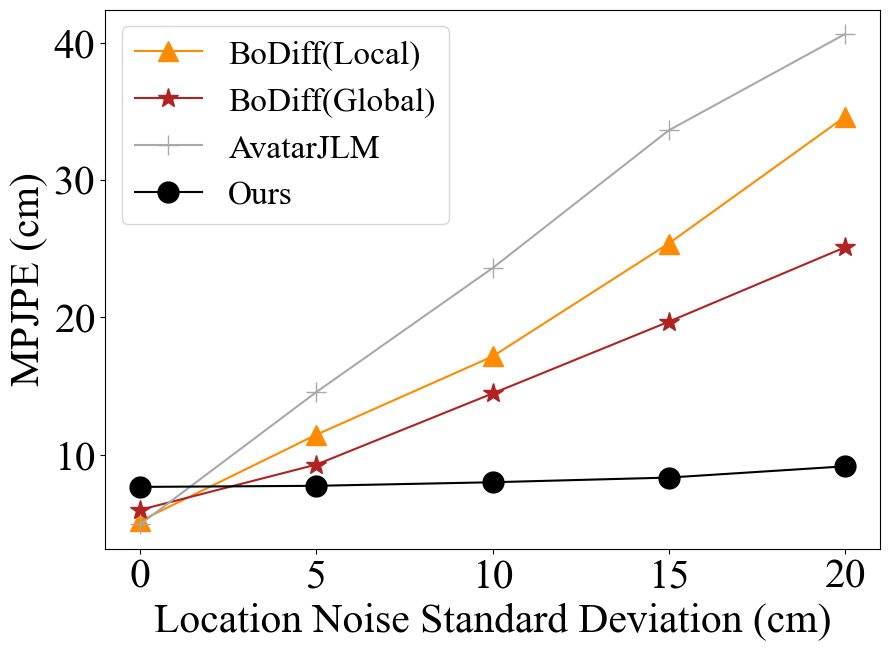}
    \vspace{-0.05in}
    \caption{\small{(a) Position error vs. body shape scaling. (b) Rotation error vs. body scale. (c) Position error vs. location noise. All these tests were performed using Protocol 1.}}
    \vspace{-0.05in}
    \label{fig:generalizeAndNoise}
\end{figure}


\textbf{Robustness to measurement noise:}
{\name} is designed to be implicitly robust to location measurement noise as well.
We inject zero-mean i.i.d. Gaussian noise into the input location streams and compute the estimation errors, while maintaining the \textit{default} body shape and the rotation measurements.
This is an important test for practical applications since real-world wearable sensors---like watches and phones---have difficulty with measurement errors.
Fig. \ref{fig:generalizeAndNoise}c shows the location error under increasing Gaussian noise variance (the rotation error is reported in the Appendix).
Evidently, {\name} stays flat while other baselines degrade with noise.
This is expected because while BoDiffusion's model is sensitive to location noise, {\name} uses the location only for inverse guidance, allowing the prior to play an important role in the final pose estimates.
In our experiments, we also found that the velocity error in noisy conditions is lower in the case of \name than with BoDiffusion. 
The output pose sequences from the baselines have high jitter, indicating the estimated poses are out of distribution.


\begin{wrapfigure}{r}{0.5\textwidth}
\vspace{-0.2in}
  \centering
  \includegraphics[width=0.5\textwidth]{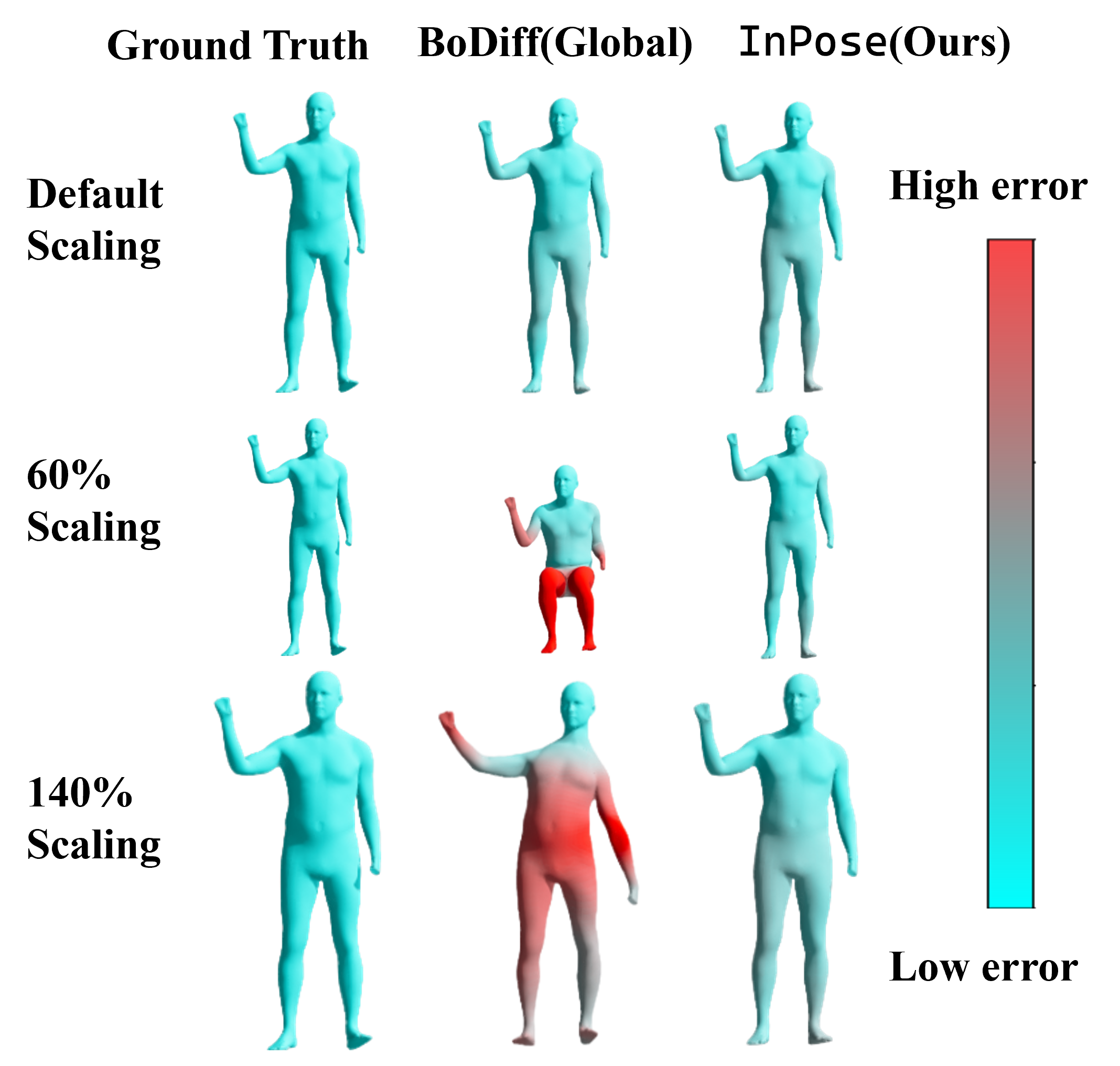}
  \captionsetup{justification=centering}
  \caption{Qualitative results with scaling body shape. The same pose is used for all scales.}
  \label{fig:scale_qual}
  \vspace{-0.2in}
\end{wrapfigure}

\textbf{Qualitative results with scaling:}
Fig.\ref{fig:scale_qual} presents qualitative comparisons between {\name} and BoDiffusion(Global), for the default body shape and two scaling factors of $0.6$ and $1.4$.
BoDiffusion performs better for the default size, especially in the lower body, but degrades at the task of generalization.
The errors are especially prominent in the \(0.6\) case, where BoDiffusion predicts the lower body to be in a squatted pose because the measurements are generated by a user of short stature.
Since the priors were learned by BoDiffusion from data generated by a user with the default body shape, there is no way to inform the model of this difference.
For the $1.4$ case too, BoDiffusion incurs higher torso error. 

\textbf{Varying relative sizes of body parts:}
Table \ref{tab:bod_var} reports results when the body parts are not scaled up or down uniformly; instead, limbs and torso are scaled with different scaling factors.
This accounts even for outliers in human variations, e.g., basketball players with longer arms, or athletes with longer legs.
To create the ground truth data, we scale bone lengths first, which are then used to recompute the joint locations \(l_{\jointset}\) from the scale-free joint rotations \(r_{\jointset}\).
Unfortunately, the root translation \(l_1\) is a non-trivial function of \(r_{\jointset}\) and the body shape.
But in general, we observe that \(l_1\) is similar across body shapes that share the same lower body bone lengths.
Thus, we preserve both \(l_1\) and the lower body shape, and only vary the upper body shape for these tests.

\vspace{-0.1in}
\begin{table}[hbt]
    \centering
    \caption{Algorithm comparison for varying upper body shape. The metrics used are Mean Joint Position Error(MPJPE) in cm, Mean Joint Rotation Error(MPJRE) in degrees, Upper Joint Position Error(UPE) in cm, and Lower Joint Position Error(LPE) in cm. The lower body shape was kept the same, while the upper body bone lengths were scaled.}
    \vspace{-0.1in}
    \begin{subtable}{\textwidth}
      \centering
      \caption{Results with Upper body shape variation (Protocol 1) (\(\downarrow\) is better)}
      \centering
     \resizebox{\columnwidth}{!}{%
      \begin{tabular}{l|llll|llll|llll}
        \toprule
        \multirow{2}{*}{Algorithm}  & \multicolumn{4}{c}{Default shape} & \multicolumn{4}{c}{Upper body \(\times 1.4\)} & \multicolumn{4}{c}{Arms \(\times 1.4\), Torso \(\times 0.7\)}\\
             & MPJPE     & MPJRE & UPE & LPE& MPJPE  & MPJRE & UPE & LPE & MPJPE  & MPJRE & UPE & LPE\\
        \midrule
        AvatarJLM & \textbf{4.92}  & \textbf{4.25}  & \textbf{2.13} & 9.94  & 26.09 & 7.02  & 25.46  & 27.47 & 18.89 & 9.33  & 14.76  & 25.95  \\
        BoDiffusion(Local)  & 5.16 & 4.32  & 2.36  & \textbf{9.72}  & 25.69 & 15.35  & 22.79  & 30.21 & 9.98 & 9.24  & 8.05  & 13.33 \\
        BoDiffusion(Global)  & 5.97 & 4.97  & 2.35  & 11.96  & 13.40 & 11.48  & 10.91  & 17.93 & 7.61 & 7.24  & 5.15  & \textbf{11.98}  \\
        \name  & 7.64  & 6.38  & 3.36  & 14.74  & \textbf{9.15} & \textbf{6.71}  & \textbf{4.80}  & \textbf{16.31} & \textbf{7.45} & \textbf{6.52}  & \textbf{3.23}  & 14.6 \\
        \bottomrule 
      \end{tabular} %
      }
    \end{subtable}
    \vspace{0.01in}

    \begin{subtable}{\textwidth}
      \centering
      \caption{Results with Upper body shape variation (Protocol 2) (\(\downarrow\) is better)}
      \centering
     \resizebox{\columnwidth}{!}{%
      \begin{tabular}{l|llll|llll|llll}
        \toprule
        \multirow{2}{*}{Algorithm}  & \multicolumn{4}{c}{Default shape} & \multicolumn{4}{c}{Upper body \(\times 1.4\)} & \multicolumn{4}{c}{Arms \(\times 1.4\), Torso \(\times 0.7\)}\\
             & MPJPE     & MPJRE & UPE & LPE& MPJPE  & MPJRE & UPE & LPE & MPJPE  & MPJRE & UPE & LPE\\
        \midrule
        AvatarJLM & \textbf{3.54} & 3.11 & \textbf{1.49} & \textbf{6.92} & 27.13 & 8.36 & 26.02 & 29.21 & 20.32 & 9.34 & 15.83 & 27.98 \\
        BoDiffusion(Local)  & 3.59 & \textbf{2.68} & 1.51 & 7.0 & 26.12 & 14.51 & 21.34 & 33.62 & 10.13 & 8.41 & 8.13 & \textbf{13.63} \\
        BoDiffusion(Global)  & 4.90 & 3.45 & 1.90 & 9.79  & 17.39 & 12.71 & 13.59 & 23.77 & 9.99 & 8.78 & 7.25 & 14.85  \\
        \name  & 7.53 & 4.73 & 2.9 & 15.04  & \textbf{8.94} & \textbf{4.73} & \textbf{3.97} & \textbf{16.85} & \textbf{6.96} & \textbf{4.77} & \textbf{2.60} & 14.17 \\
        \bottomrule 
      \end{tabular} %
      }
    \end{subtable}
    \label{tab:bod_var}
\end{table}

Evident from Table \ref{tab:bod_var} (and more results in Table \ref{tab:bod_var_2} in the Appendix), the results are aligned with previous graphs.
With the default body shape, the baselines outperform \name on all $4$ metrics. 
This is because the respective neural networks are able to learn a complex non-linear mapping from both the joint rotation \(r_{\measureset}\) and the location inputs \(l_{\measureset}\) to the user's pose since the training body shape (default shape) and the inference body shape are identical.
In contrast, \name uses linear constraints using \(l_{\measureset}\) to steer the output towards an estimate of the pose sequence that best explains the input.

However, when bone lengths change, the baselines are misguided by the input locations and therefore do not generalize.
{\name} outperforms both baselines across the MPJRE, MPJPE, and UPE metrics, as inverse location guidance accounts for changes in body shape.
Unfortunately, the LPE error remains higher for \name in some cases.

\textbf{Using Head translation for CFG:}
We also test using the head joint position as an input to the CFG score model.
This is termed \name(head) in Table \ref{tab:bod_var_3}.
The shape variations compared are less extreme than in Table \ref{tab:bod_var} to be more favorable to the Baselines.

In this comparison, we find that although the MPJRE of \name without head input stays consistent across body shapes, it falls behind the other baselines when tested with relatively minor variation in bone lengths compared to the default body shape.
However, by simply adding the head position as a CFG input to the diffusion model, \name(head) outperforms all baselines across the tested body shape variations in nearly all metrics.
Furthermore, we see considerable improvement in lower body motion estimation as evidenced by the LPE metrics.

\begin{table}[!ht ]
    \centering
    \caption{Algorithm comparison for varying upper body shape using Protocol 1. \name(head) augments \name by using the head translation input for CFG. The Shape variation in this table is less extreme than in Table \ref{tab:bod_var}.}
     \resizebox{\columnwidth}{!}{%
      \begin{tabular}{l|llll|llll|llll}
        \toprule
        \multirow{2}{*}{Algorithm}  & \multicolumn{4}{c}{Upper body \(\times 0.85\)} & \multicolumn{4}{c}{Upper body \(\times 1.17\)} & \multicolumn{4}{c}{Arms \(\times 0.85\), torso \(\times1.17\)}\\
             & MPJPE     & MPJRE & UPE & LPE& MPJPE  & MPJRE & UPE & LPE & MPJPE  & MPJRE & UPE & LPE\\
        \midrule
        BoDiffusion(Local)  & 7.44 &  7.47 & 5.03 &  \textbf{11.57}  &  11.72 &  8.76 & 9.21 & 15.82  & 6.27 & 4.91  &  3.64 & \textbf{10.58} \\
        BoDiffusion(Global)  & 6.76 & 7.79  & 3.59 &  12.16  & 8.18  &  6.99 & 5.57 & 12.66 & 6.8 & 5.7  & 3.36  & 12.49  \\
        \midrule
        \name   &  7.13 & 6.26  & 2.87 &  14.25 & 8.25  & 6.51  &  3.95 & 15.37 & 7.8  & 6.33  & 3.46  & 14.95 \\
        \name(head)   &  \textbf{6.08} &  \textbf{5.06} & \textbf{2.4} & 12.06  & \textbf{5.92}  & \textbf{4.51}  & \textbf{2.63}  & \textbf{11.29} & \textbf{5.71}  & \textbf{4.5}  & \textbf{2.37}  & 11.16 \\
        \bottomrule 
      \end{tabular} %
      }
    
    \vspace{1em}
    \label{tab:bod_var_3}
    \vspace{-0.1in}
\end{table}

\begin{wrapfigure}{r}{0.35\textwidth}
\vspace{-0.1in}
  \centering
  \includegraphics[width=0.34\textwidth]{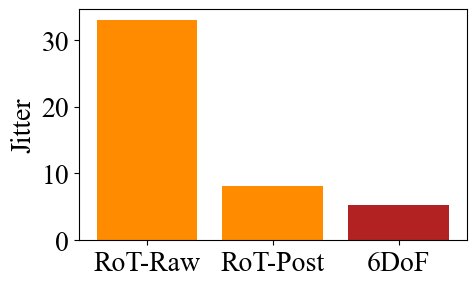}
  \captionsetup{justification=centering}
  \vspace{-0.15in}
  \caption{\small{6D vs. rotation matrix representation for Diffusion compared using Jitter (\(\downarrow\) is better).}}
  \label{fig:dof}
  \vspace{-0.1in}
\end{wrapfigure}
\textbf{Ablation: 6D versus rotation matrices:}
Recall that {\name} needed to tackle the non-linearity from the $\mathcal{D}(.)$ function, which was needed to convert 6D representation to matrices.
A natural question is: was it necessary to use 6D at all? 
Figure \ref{fig:dof} shows the importance of 6D over $3\times3$ rotation matrices.
We train two unconditional UNET-based diffusion models---one using the 6D representation and the other using 3D rotation matrices.
These models generate 64-frame human pose samples as global joint angles in their respective representations.
These poses are then used to generate sequences of body meshes.
The raw rotation matrix meshes (labeled RoT-Raw) from the Rotation matrix UNET exhibit high jitter.
In contrast, the 6D meshes(labeled 6D) from the 6D UNET have much lower jitter.
This is primarily because the raw output rotation matrices are not unitary. 
In fact, postprocessing the rotation matrices using $\mathcal{D}(.)$ on the first 2 columns (labeled RoT-Post) considerably lowers the jitter.

More qualitative results are shown in the Appendix, and we provide some animations on our website: \url{https://iclrinpose-crypto.github.io/ICLRInPose/}.


\section{Related Work}
\vspace{-0.1in}
\textbf{Deep-learning based methods for pose-tracking:}
Deep learning has achieved significant success in estimating pose from a sparse set of measurements. 
\cite{HMD-NeMo, agrol, physdiff} all use HMD-based location and rotation sensors to estimate pose and translation.
Nearly all these works focus on using sensor information from the head and the two wrists.
\cite{avatarposer} estimate pose by first using a Transformer encoder to estimate local joint angles, and then estimates translation by fitting the generated head translation to the head location sensor input.
\cite{BoDiffusion} use a CFG diffusion-based approach to estimate pose.
Most of the above-mentioned approaches are specifically trained for a single user's body parameters, which comes at the cost of worse generalizability.
One work that does generalize across users, \cite{flag} jointly trains sensor inputs and bone length parameters in a flow-based generative model framework.
But this algorithm requires jointly training pose and a large number of body shapes in order to generalize. 
In contrast, our work can directly accept any set of body bone parameters without requiring any bone shape generalization training. \\
A large number of works \cite{dip, imu-poser} focus on pose and translation prediction using a sparse set of IMUs that provide acceleration and orientation data from the joints to which they are attached.
\citet{trans-pose} use a cascaded sequence of Neural networks to predict pose from 6 IMU sensors. 
\cite{pip} incorporate physics-based dynamics constraints on the user's joint motion to improve pose estimation accuracy. 

\textbf{Human Motion synthesis:}
A closely related topic to pose estimation is pose synthesis \cite{modi, closd}. This usually involves training a generative model on a human motion dataset, along with textual labels, to generate motion either based on a prompt or unconditionally. \cite{humanprior} use a trained motion synthesis model as a prior for more complex tasks, such as motion blending and multi-person interactive motion. \\
Some of these textual models also accept other inputs to serve as guidance for the generated motion \cite{maskedmimic, cghoi}. 
\cite{omnicontrol} generate human motion from textual prompts using a diffusion model, but can also use a gradient-descent-based inverse guidance method to specify motion trajectories of various joints.
Their work is closely related to ours, but they also require a Transformer-based realism-guidance module that encodes joint location-control signals.

\section{Limitations and Future Work}

The biggest limitation of \name is that the root translation isn't directly incorporated into the algorithm, which reduces its capability to infer lower body pose.
Thus, the algorithm can only use the prior to infer translation and, thereby, infer lower body movement. 
This limitation can be addressed by using the head position input in the CFG score model as in \name(Head). 
This is, unfortunately, at the cost of reducing the inherent flexibility of our inverse guidance approach.

Another major limitation is that \name cannot outperform the baselines in the default body shape scenario.
A reason for this may be that the Gaussian approximation in Theorem 1 may not be sound.

Furthermore, \name requires the user's body bone lengths during inference.
An interesting direction to pursue is blindly determining the user's bone lengths without an explicit calibration procedure.


\section{Conclusions}
\vspace{-0.1in}
We propose \name, a diffusion-based model that estimates the user's 3D fully-body pose sequence from 3 sensor measurements.
By decomposing poses into a scale-free and a scale-dependent component, we find a pathway to an inverse problem formulation that in turn enables zero-shot generalization. 
As a result, any new body shape or shape need not be re-trained with personalized data; {\name} is able to guide the diffusion-based prior by computing whether the samples from the prior are consistent with the joint location sensor measurements.
There is room for improvement at least on two fronts, namely in outperforming the baselines for the default sizes for which they are trained, and in improving lower body errors by better modeling the physics of leg movements.
We believe these are rich problems for future research.



\newpage

\section{Acknowledgements}

We thank the anonymous reviewers for their invaluable feedback. This work was partially supported by NSF \#2008338, \#1909568, \#2148583, and \#MRI-2018966. This work used DELTA at NCSA through allocation CIS230230 from the Advanced Cyberinfrastructure Coordination Ecosystem: Services \& Support (ACCESS) program, which is supported by U.S. National Science Foundation grants \#2138259, \#2138286, \#2138307, \#2137603, and \#2138296.

\bibliography{references}
\bibliographystyle{iclr2026_conference}

\newpage

\appendix

\section{Proof of Theorem \ref{thm:gaussian_proof}}
\label{sec:appendix_thm}

\fta*
\begin{proof}
(Sketch)We will prove the validity of the Gaussian approximation for a single joint rotation \(r^t_j\).
This result can then be naturally extended to \(r^t_{\jointset}\) because each joint rotation is independent in the global reference frame.
From \(\Pi\)GDM, we approximate \(p_t(r^0_{\jointset}|r^t_{\jointset})\approx \mathcal{N}(\hat{r}^t_{\jointset}, w_t^2\text{I})\). 
This implies that every element \(r^{0,k}_j,\;\forall k\in \{1:6\},\;\forall j\in\jointset \sim \mathcal{N}(\hat{r}^{t,k}_j,w_t)\) are i.i.d. Gaussian Random variables.

The mapping \(\bar{\mathcal{D}}(r^0_j) = R^0_j\) is defined as:

\begin{alignat}{2}
    c^{1} &= \begin{bmatrix}
        r^{1:3}_{j}
    \end{bmatrix}^{\top}, 
    && \bar{c}^{1} = \frac{c^1}{||c^1||} \nonumber\\ 
    c^{2} &= \begin{bmatrix}
        r^{4:6}_{j}
    \end{bmatrix}^{\top} - ~ \text{proj}_{\bar{c}^{1}} \Big( \begin{bmatrix}
        r^{4:6}_{j}
    \end{bmatrix}^{\top} \Big), \qquad
    && \bar{c}^2 = \frac{c^2}{||c^2||} \nonumber\\
    \bar{c}^{ 3} &= \bar{c}^1 \times \bar{c}^2 \nonumber\\
    R_{j} &= \begin{bmatrix} 
        \bar{c}^{1} & \bar{c}^{2} & \bar{c}^{3}
    \end{bmatrix} &&
    \label{eqn:r6d_to_rot}
\end{alignat}

By the definition of \(\bar{\mathcal{D}}(r^0_j) = R^0_j\), the unit norm constraints \(||\hat{r}^{t,1:3}_j||=||\hat{r}^{t,3:6}_j||=1\), and \( \langle \hat{r}^{t,1:3}_j, \hat{r}^{t,3:6}_j\rangle=0\), the elements of the first two columns, \(R^{0,(1:2,1:3)}_j \sim \mathcal{N}( \hat{R}^{t,(l,m)}_j,w_t^2)\) are also Gaussian random variables that are uncorrelated. 
Now, the elements of the third column of each \(R^{0,(3,1:3)}_j\) are the result of the cross-product \(r^{0,1:3}_j\times r^{0,3:6}_j\). 
Let's first discuss \(R^{0,(3,2)}_j=(r_j^{0,3}r_j^{0,4} - r_j^{0,1}r_j^{0,6})\). The mean of this random variable is:

\begin{align}
    \mathbb{E}[r_j^{0,3}r_j^{0,4} - r_j^{0,1}r_j^{0,6}] &= \mathbb{E}[r_j^{0,3}r_j^{0,4}] - \mathbb{E}[r_j^{0,1}r_j^{0,6}] \\
    &= \mathbb{E}[r_j^{0,3}]\mathbb{E}[r_j^{0,4}] - \mathbb{E}[r_j^{0,1}]\mathbb{E}[r_j^{0,6}] \\
    &= \hat{r}^{t,3}_j\hat{r}^{t,4}_j - \hat{r}^{t,1}_j\hat{r}^{t,6}_j
\end{align}

Thus, all the elements of the third column \(R^{0,(3,1:3)}_j\) have their mean given by the respective cross-product terms. Next, we can compute the variance as:

\begin{align}
    \text{Var}[r_j^{0,3}r_j^{0,4} - r_j^{0,1}r_j^{0,6}] &= \mathbb{E}[(r_j^{0,3}r_j^{0,4} - r_j^{0,1}r_j^{0,6})^2] - \mathbb{E}[r_j^{0,3}r_j^{0,4} - r_j^{0,1}r_j^{0,6}]^2 \label{eqn:full_var}
\end{align}

Treating the terms separately, we get

\begin{align}
    \mathbb{E}&[(r_j^{0,3}r_j^{0,4} - r_j^{0,1}r_j^{0,6})^2] \nonumber\\
    &= \mathbb{E}[(r_j^{0,3}r_j^{0,4})^2] + \mathbb{E}[(r_j^{0,1}r_j^{0,6})^2] - 2\mathbb{E}[(r_j^{0,3}r_j^{0,4}r_j^{0,1}r_j^{0,6})] \\
    &= \mathbb{E}[(r_j^{0,3})^2]\mathbb{E}[(r_j^{0,4})^2] + \mathbb{E}[(r_j^{0,1})^2]\mathbb{E}[(r_j^{0,6})^2] - 2\mathbb{E}[(r_j^{0,3}r_j^{0,4}r_j^{0,1}r_j^{0,6})] \\
    &= (w_t^2+(\hat{r}^{t,3}_j)^2)(w_t^2+(\hat{r}^{t,4}_j)^2) + (w_t^2+(\hat{r}^{t,1}_j)^2)(w_t^2+(\hat{r}^{t,6}_j)^2) - 2\hat{r}^{t,3}_j\hat{r}^{t,4}_j\hat{r}^{t,1}_j\hat{r}^{t,6}_j \\
    &= 2w_t^4 + w_t^2((\hat{r}^{t,3}_j)^2 + (\hat{r}^{t,4}_j)^2) + (\hat{r}^{t,1}_j)^2) + (\hat{r}^{t,6}_j)^2)) + (\hat{r}^{t,3}_j\hat{r}^{t,4}_j)^2 + (\hat{r}^{t,1}_j\hat{r}^{t,6}_j)^2 - 2\hat{r}^{t,3}_j\hat{r}^{t,4}_j\hat{r}^{t,1}_j\hat{r}^{t,6}_j
\end{align}

using the independence property and the definition of variance. Next, 

\begin{align}
    \mathbb{E}[r_j^{0,3}r_j^{0,4} - r_j^{0,1}r_j^{0,6}]^2 &= \mathbb{E}[r_j^{0,3}r_j^{0,4}]^2 + \mathbb{E}[r_j^{0,1}r_j^{0,6}]^2 - 2\mathbb{E}[(r_j^{0,3}r_j^{0,4}r_j^{0,1}r_j^{0,6})] \\
    &= (\hat{r}^{t,3}_j\hat{r}^{t,4}_j)^2 + (\hat{r}^{t,1}_j\hat{r}^{t,6}_j)^2 - 2\hat{r}^{t,3}_j\hat{r}^{t,4}_j\hat{r}^{t,1}_j\hat{r}^{t,6}_j
\end{align}

Substituting the above terms into Eqn. \ref{eqn:full_var}, we get

\begin{align}
    \text{Var}[r_j^{0,3}r_j^{0,4} - r_j^{0,1}r_j^{0,6}] = w_t^2(2w_t^2 +  ((\hat{r}^{t,3}_j)^2 + (\hat{r}^{t,4}_j)^2) + (\hat{r}^{t,1}_j)^2) + (\hat{r}^{t,6}_j)^2))
\end{align}

Finally, we compute each of the covariances of the third column elements \(R^{0,(3,1:3)}_j\). 
To compute \(\text{Cov}[R^{0,(3,2)}_j,r^{0,1}_j]\):

\begin{align}
    \text{Cov}&[r_j^{0,3}r_j^{0,4} - r_j^{0,1}r_j^{0,6},r^{0,1}_j] \nonumber \\
    &= \text{Cov}[r_j^{0,3}r_j^{0,4},r^{0,1}_j] - \text{Cov}[(r_j^{0,1})^2r_j^{0,6}] \\
    &= \mathbb{E}[r_j^{0,3}r_j^{0,4}r^{0,1}_j] - \mathbb{E}[r_j^{0,3}r_j^{0,4}]\mathbb{E}[r^{0,1}_j] - \mathbb{E}[(r_j^{0,1})^2r_j^{0,6}] + \mathbb{E}[r_j^{0,1}r_j^{0,6}]\mathbb{E}[r^{0,1}_j] \\
    &= 0 - (w_t^2+(\hat{r}^{t,1}_j)^2)\hat{r}_j^{t,6} + (\hat{r}^{t,1}_j)^2\hat{r}_j^{t,6} \\
    &= -w_t^2\hat{r}_j^{t,6}
\end{align}

and \(\text{Cov}[R^{0,(3,1)}_j,R^{0,(3,2)}_j]\):

\begin{align}
    \text{Cov}&[r_j^{0,2}r_j^{0,6} - r_j^{0,3}r_j^{0,5},r_j^{0,3}r_j^{0,4} - r_j^{0,1}r_j^{0,6}] \nonumber \\
    &= 0 - \text{Cov}[r_j^{0,3}r_j^{0,5},r_j^{0,3}r_j^{0,4}] + 0 - \text{Cov}[r_j^{0,2}r_j^{0,6},r_j^{0,1}r_j^{0,6}] \\
    &= \mathbb{E}[r_j^{0,3}r_j^{0,5}]\mathbb{E}[r_j^{0,3}r_j^{0,4}]-\mathbb{E}[(r_j^{0,3})^2r_j^{0,5}r_j^{0,4}] + \mathbb{E}[r_j^{0,2}r_j^{0,6}]\mathbb{E}[r_j^{0,1}r_j^{0,6}]-\mathbb{E}[(r_j^{0,6})^2r_j^{0,1}r_j^{0,2}] \\
    &= 0-w_t^2\hat{r}_j^{t,4}\hat{r}_j^{t,5} + 0 - w_t^2\hat{r}_j^{t,1}\hat{r}_j^{t,2} \\
    &= -w_t^2(\hat{r}_j^{t,4}\hat{r}_j^{t,5} + \hat{r}_j^{t,1}\hat{r}_j^{t,2})
\end{align}

The list of variances is the following:

\begin{align*}
    \text{Var}[R^{0,(3,1)}_j] = w_t^2(2w_t^2+(\hat{r}^{t,2}_j)^2+(\hat{r}^{t,6}_j)^2+(\hat{r}^{t,5}_j)^2+(\hat{r}^{t,3}_j)^2) \\
    \text{Var}[R^{0,(3,2)}_j] = w_t^2(2w_t^2+(\hat{r}^{t,3}_j)^2+(\hat{r}^{t,4}_j)^2+(\hat{r}^{t,1}_j)^2+(\hat{r}^{t,6}_j)^2) \\
    \text{Var}[R^{0,(3,3)}_j] = w_t^2(2w_t^2+(\hat{r}^{t,1}_j)^2+(\hat{r}^{t,5}_j)^2+(\hat{r}^{t,4}_j)^2+(\hat{r}^{t,2}_j)^2) 
\end{align*}

and the covariances are:

\begin{alignat*}{3}
    \text{Cov}[R^{0,(3,1)}_j, r^{0,2}_j] &= w_t^2\hat{r}^{t,6}_j\qquad & \text{Cov}[R^{0,(3,2)}_j, r^{0,1}_j] &= -w_t^2\hat{r}^{t,6}_j\qquad & \text{Cov}[R^{0,(3,3)}_j, r^{0,1}_j] &= w_t^2\hat{r}^{t,5}_j \\
    \text{Cov}[R^{0,(3,1)}_j, r^{0,3}_j] &= -w_t^2\hat{r}^{t,5}_j & \text{Cov}[R^{0,(3,2)}_j, r^{0,3}_j] &= w_t^2\hat{r}^{t,4}_j & \text{Cov}[R^{0,(3,3)}_j, r^{0,2}_j] &= -w_t^2\hat{r}^{t,4}_j \\
    \text{Cov}[R^{0,(3,1)}_j, r^{0,5}_j] &= -w_t^2\hat{r}^{t,3}_j & \text{Cov}[R^{0,(3,2)}_j, r^{0,4}_j] &= w_t^2\hat{r}^{t,3}_j & \text{Cov}[R^{0,(3,3)}_j, r^{0,4}_j] &= -w_t^2\hat{r}^{t,2}_j \\
    \text{Cov}[R^{0,(3,1)}_j, r^{0,6}_j] &= w_t^2\hat{r}^{t,2}_j & \text{Cov}[R^{0,(3,2)}_j, r^{0,6}_j] &= -w_t^2\hat{r}^{t,1}_j & \text{Cov}[R^{0,(3,3)}_j, r^{0,5}_j] &= w_t^2\hat{r}^{t,1}_j 
\end{alignat*}

\begin{align*}
    \text{Cov}[R^{0,(3,1)}_j, R^{0,(3,2)}_j] &= -w_t^2(\hat{r}^{t,1}_j\hat{r}^{t,2}_j+\hat{r}^{t,4}_j\hat{r}^{t,5}_j) \\
    \text{Cov}[R^{0,(3,2)}_j, R^{0,(3,3)}_j] &= -w_t^2(\hat{r}^{t,2}_j\hat{r}^{t,3}_j+\hat{r}^{t,5}_j\hat{r}^{t,6}_j) \\
    \text{Cov}[R^{0,(3,3)}_j, R^{0,(3,1)}_j] &= -w_t^2(\hat{r}^{t,1}_j\hat{r}^{t,3}_j+\hat{r}^{t,4}_j\hat{r}^{t,6}_j) 
\end{align*}

while the terms that have been omitted are all \(0\).

Now that we have the respective variances and covariances, we can build the positive definite covariance matrix for \(p_t({\bar{\mathcal{D}}}(r^0_j) = \text{vec}(R^0_j)|r^t_j)\), which at diffusion step \(t\) is \(w_t^2\Sigma_{\hat{r}^t_j}\). 
To show that \(\Sigma_{\hat{r}^t_j}\) is a positive definite matrix, we use Sylvesters criterion \cite{sylvester}. 
It states that a symmetric matrix \(\Sigma\in\mathbb{R}^{N\times N}\) is positive definite if each upper left corner matrix of sizes \(n\in\{1,\dots, N\}\) has a positive determinant. 
Since \(\Sigma_{\hat{r}^t_j}\) is a relatively large matrix of size \(9\times 9\), we use SymPy \cite{sympy} to compute each corner matrix determinant. 

We find that each determinant equals a positive number that depends on \(w_t^2\). 
The first \(6\) corner matrices trivially have determinant 1 since they are all identity matrices. 
The determinant for \(n=\{7,8,9\}\) (termed \(\Sigma^{n\times n}_{\hat{r}^t_j}\)) are the following:

\begin{alignat*}{1}
    \text{det}(\Sigma^{7\times 7}_{\hat{r}^t_j}) = 2w_t^2 \qquad   \text{det}(\Sigma^{8\times 8}_{\hat{r}^t_j}) = (2w_t^2)^2 \qquad \text{det}(\Sigma_{\hat{r}^t_j}) = (2w_t^2)^3
\end{alignat*}

thus proving that \(\Sigma_{\hat{r}^t_j}\) is positive definite.

Since we assume every rotation in \(r^t_{\jointset}\) is independent of each other, when we combine the \(w_t^2\Sigma_{\hat{r}^t_j}\) for each rotation, we get a large positive definite matrix \(w_t^2\Sigma_{\hat{r}^t_{\jointset}}\).
Thus, we can approximate the distribution \(p_t(\mathcal{D}(r^0_{\jointset})|r^t_{\jointset})\) as a Gaussian using the mean and the derived positive definite matrix.

\end{proof}

\section{Implementation Details}
\label{sec:implement}

As stated earlier, we use BoDiffusion as the base Diffusion model for \name.
The conditional inputs for the CFG score model are the joint rotations, locations, the joint velocities, and the angular velocities from the $3$ measured joints. 
The rotations are provided using the 6D representation. The output of the network is the $22$ joint rotations \(\Theta_{\jointset}\) in the local reference frame, expressed in 6D.
The model is designed to output a frame of 41 samples.
For sequences larger than 41 samples, it uses an overlap of 20 samples between successive frames.

Since we require the model to output joint angles \(r_{\jointset}\) in the global frame, we fine-tune it on our training datasets to produce global rotation angles.
We use the weights provided by the BoDiffusion authors to initialise fine-tuning. 
We also tune the CFG model to use only joint rotations and angular velocities by training it on a subset of samples with zeroed-out joint locations and velocities. 
During inverse-guidance-based inference, we can similarly zero out the location and velocity CFG inputs to the diffusion network.

This model has about 22M parameters. Finetuning is done using the DDPM framework for \(N=1000\) steps. 
Inference is done for \(N=50\) steps using DDIM, for both pure CFG-based guidance as well as \name's inverse guidance.
We used an Nvidia RTX Titan GPU for 2 days of training with a batch size of 256.

\textbf{Inverse Guidance:}
For the \(\Pi\)GDM-based inverse guidance term \(\nabla_{r^t} \log p_t(l_{\measureset}|r^t_{\jointset})\), we used an additional scale parameter, which we found was useful in improving performance for all metrics.
Increasing this term led to minor increases in MPJPE performance at the cost of worse MPJRE error.
Secondly, since the positive definite matrix \(\Sigma_{\hat{r}^t_{\jointset}}\) is very large, its usage substantially slows down run-time.
We found that setting \(\Sigma_{\hat{r}^t_{\jointset}}\) to simply the identity matrix performs reasonably well.

\section{More Results}
\label{sec:appendix_abl}
\begin{table}[!ht ]
    \centering
    \caption{Algorithm comparison for varying upper body shape. The metrics used are Mean Joint Position Error(MPJPE) in cm, Mean Joint Rotation Error(MPJRE) in degrees, Upper Joint Position Error(UPE) in cm, and Lower Joint Position Error(LPE) in cm. The lower body shape was kept the same, while the upper body bone lengths were scaled.}
    \begin{subtable}{\textwidth}
      \centering
      \caption{Results with Upper body shape variation (Protocol 1) (\(\downarrow\) is better)}
      \centering
     \resizebox{\columnwidth}{!}{%
      \begin{tabular}{l|llll|llll|llll}
        \toprule
        \multirow{2}{*}{Algorithm}  & \multicolumn{4}{c}{Upper body \(\times 0.7\)} & \multicolumn{4}{c}{Arms \(\times 1.4\)} & \multicolumn{4}{c}{Arms \(\times 0.7\)}\\
             & MPJPE     & MPJRE & UPE & LPE& MPJPE  & MPJRE & UPE & LPE & MPJPE  & MPJRE & UPE & LPE\\
        \midrule
        AvatarJLM &  20.59 &  10.33 & 19.02 & 23.32  & 9.21  & 7.21  & 7.79 & \textbf{11.84} & 7.90  & 8.15  & 5.88 & \textbf{11.46}  \\
        BoDiffusion(Local)  & 9.00 &  11.29 & 6.68 &  13.2  &  15.57 &  11.99 & 13.32 &  19.29 & 9.36 & 8.84  & 7.12  & 13.27 \\
        BoDiffusion(Global)  & 7.44 &  10.56 & 4.35 &  \textbf{12.81}  & 9.51  & 9.02  & 7.18 & 13.66 & 7.83 & 8.96  & 4.89  & 12.88  \\
        \name   &  \textbf{6.67} &  \textbf{6.17} & \textbf{2.42} & 13.80  & \textbf{8.25}  & \textbf{6.67}  & \textbf{3.97}  & 15.4 & \textbf{7.22}  & \textbf{6.20}  & \textbf{2.92}  & 14.35 \\
        \bottomrule 
      \end{tabular} %
      }
    \end{subtable}
    \vspace{0.02in}

    \begin{subtable}{\textwidth}
      \centering
      \caption{Results with Upper body shape variation (Protocol 2) (\(\downarrow\) is better)}
      \centering
     \resizebox{\columnwidth}{!}{%
      \begin{tabular}{l|llll|llll|llll}
        \toprule
        \multirow{2}{*}{Algorithm}  & \multicolumn{4}{c}{Upper body \(\times 0.7\)} & \multicolumn{4}{c}{Arms \(\times 1.4\)} & \multicolumn{4}{c}{Arms \(\times 0.7\)}\\
             & MPJPE     & MPJRE & UPE & LPE& MPJPE  & MPJRE & UPE & LPE & MPJPE  & MPJRE & UPE & LPE\\
        \midrule
        AvatarJLM & 21.63 & 9.06 & 19.86 & 24.56 & 10.30 & 8.02 & 8.57 & \textbf{13.36} & 7.44 & 6.67 & 6.17 & \textbf{9.65} \\
        BoDiffusion(Local)  & 7.72 & 9.14 & 6.06 & \textbf{10.67} & 15.93 & 11.11 & 12.77 & 21.11 & 7.61 & 6.95 & 6.30 & 9.91 \\
        BoDiffusion(Global)  & 9.17 & 9.32 & 6.72 & 13.38 & 13.19 & 10.76 & 9.83 & 18.94 & 9.19 & 7.17 & 7.13 & 12.63  \\
        \name  & \textbf{6.53} & \textbf{4.75} & \textbf{2.11} & 13.84 & \textbf{7.80} & \textbf{4.74} & \textbf{3.21} & 15.24 & \textbf{7.34} & \textbf{4.78} & \textbf{2.65} & 14.97 \\
        \bottomrule 
      \end{tabular} %
      }
    \end{subtable}
    
    \vspace{1em}
    \label{tab:bod_var_2}
\end{table}

\textbf{Performance with varying upper body(Part2):}
We show more results with varying upper body shape in Table \ref{tab:bod_var_2}.
We once again see that \name is able to generalize better to changes in relative bone lengths, but lags behind the other baselines in lower body pose estimation.

\begin{figure}
     \centering
     \begin{subfigure}[b]{0.4\linewidth}
         \centering
         \includegraphics[width=\linewidth]{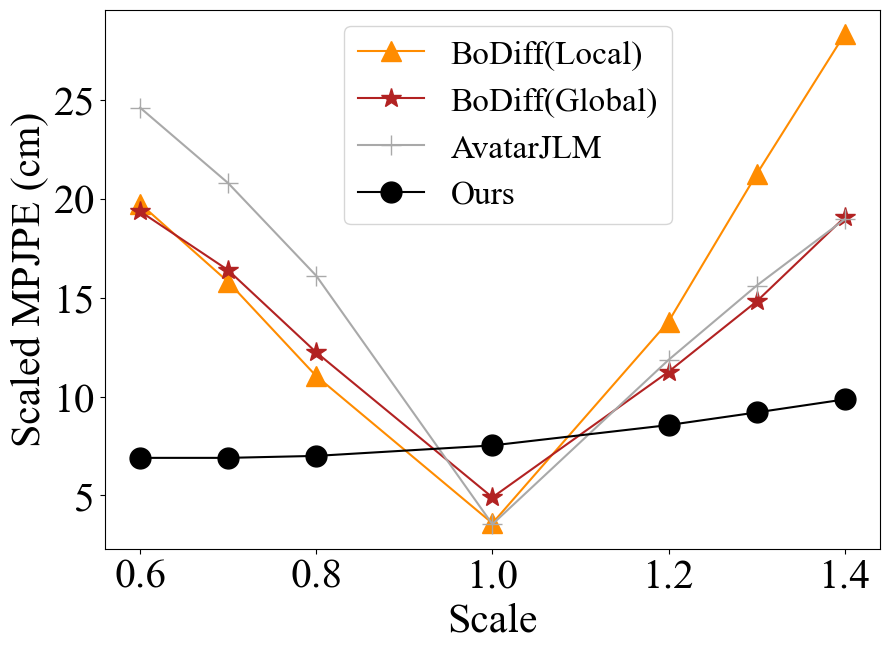}
         \caption{MPJPE divided by Scale, vs default body shape scaling}
        \label{fig:scale_mpjpe_prot2}
     \end{subfigure}
     \hfill
     \begin{subfigure}[b]{0.4\linewidth}
         \centering
         \includegraphics[width=\linewidth]{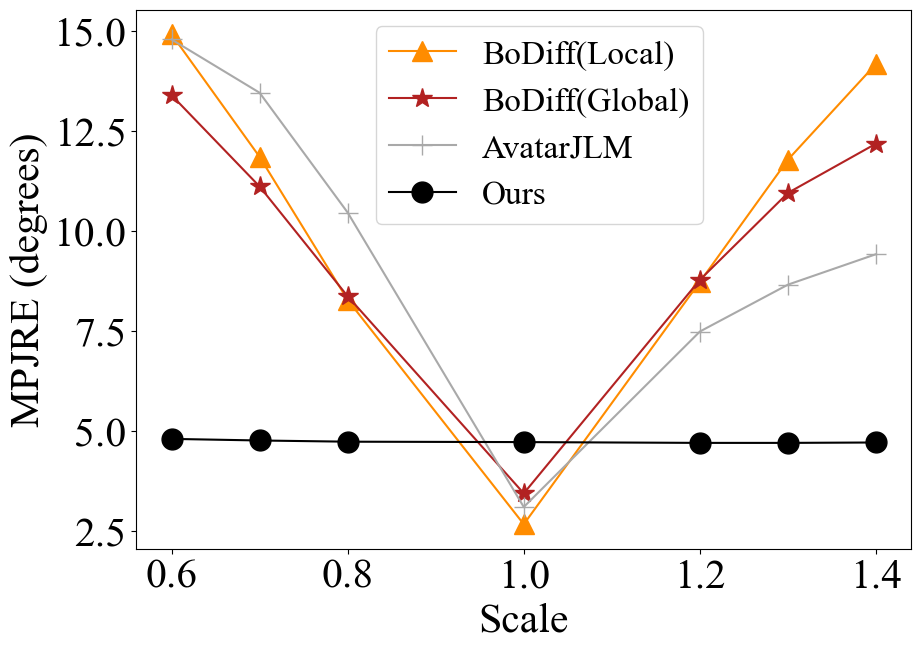}
         \caption{MPJRE vs body scale}
         \label{fig:scale_mpjre_prot2}
     \end{subfigure}
     \caption{Performance with body-size scaling using Protocol 2.}
     \vspace{-0.1in}
\end{figure}

\textbf{Performance with scaling body shape on Protocol 2:}
Figures \ref{fig:scale_mpjpe_prot2} and \ref{fig:scale_mpjre_prot2} are the performance results from scaling body shape using Protocol 2. 
The tests were conducted in a manner similar to what is described in Section \ref{sec:results}.
We once again see that \name performs worse than the baselines in the default case, where the same body shape used during training is also used for testing.
However, when the default body shape is scaled, \name outperforms the baselines.
MPJRE, which measures scale-free performance, remains constant, while the scale-dependent MPJPE scales proportionally with scaling.

\textbf{Robustness to measurement noise(cntd):}
Here are the rotation error results from the robustness study we performed in the Section \ref{sec:results}.
As stated earlier, we injected zero-mean i.i.d. Gaussian noise into the input location streams and computed the estimation errors, while maintaining the \textit{default} body shape.
Fig. \ref{fig:noise_mpjre} shows the rotation error(MPJRE) under increasing Gaussian noise variance in the location measurements.
As with the location error, {\name} stays flat while other baselines degrade with noise.
Since the \(\Pi\)GDM inverse guidance objective is formulated to be robust to noise, the prior is able to synthesize poses that are realistic while also obeying the guidance provided by the location inputs.

\begin{figure}
    \centering
    \begin{minipage}{0.4\linewidth}
    \centering
    \includegraphics[width=\linewidth]{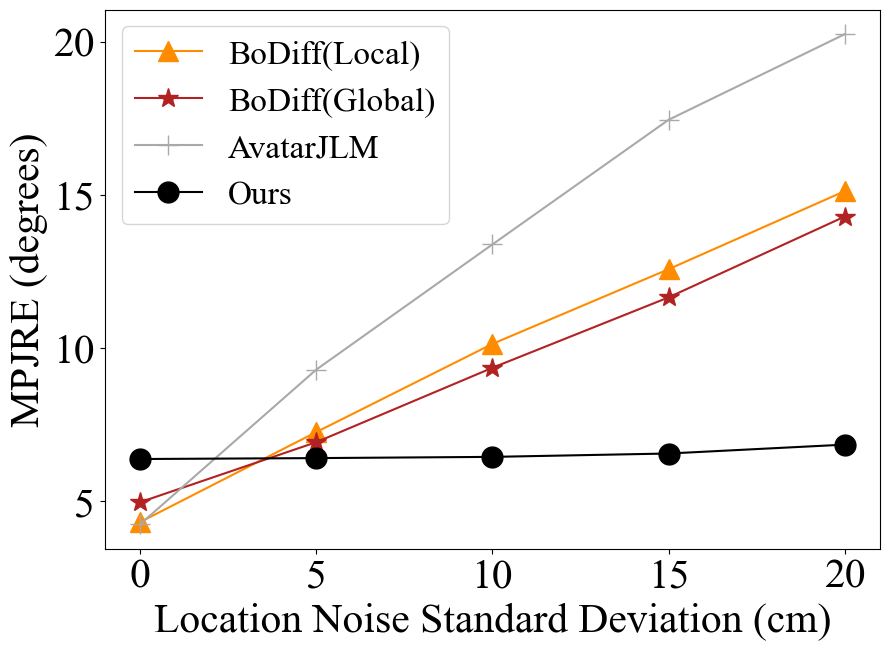}
    \captionsetup{justification=centering}
    \caption{Rotation error vs location noise.}
    \label{fig:noise_mpjre}
    \end{minipage}\hfill
    \begin{minipage}{0.4\linewidth}
  \centering
  \includegraphics[width=\linewidth]{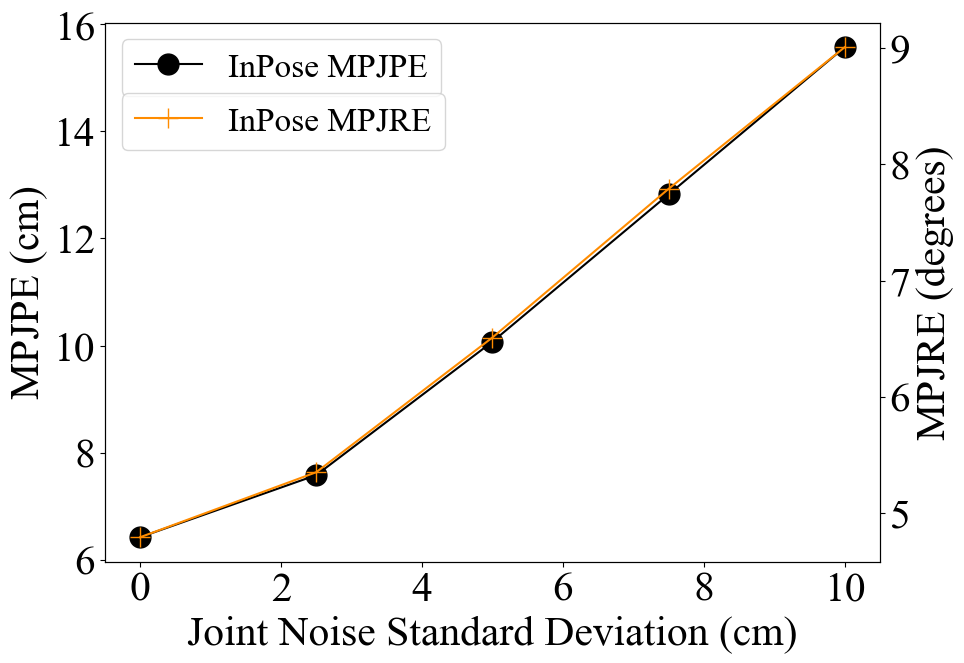}
  \captionsetup{justification=centering}
  \caption{Performance with joint length error. The left axis is the MPJPE, and the right axis is the MPJRE.}
  \label{fig:noise_j}
    \end{minipage}\hfill
\end{figure}



\textbf{Errors in joint length estimates:}
Another study involves illustrating \name's sensitivity to errors in the joint length estimates, where we show how the MPJPE and MPJRE worsen when there is an error in our knowledge of the user's bone lengths.
We add white Gaussian noise to the true bone lengths while constructing our measurement matrix \(\mathcal{A}\).
The bone length parameters are set to the default body shape, and the measurements \(y_{\measureset}\) are unaltered.

Figure \ref{fig:noise_j} shows the results for this experiment.
We see that \name is quite sensitive to bone length estimation error.
The algorithm can tolerate low errors within 1 cm, but begins to diverge any higher than that.
Making the algorithm robust to bone length estimation errors will be looked at for future work.

\textbf{Robustness to rotation noise:}
We conduct another robustness study with rotation measurements, similar to the location error study described in Section \ref{sec:results}.
Here, we add Gaussian noise to the rotation measurements while keeping the location measurements noise-free. 
We then compare the \name with the two diffusion-based baselines.
The results are summarized in Figure \ref{fig:noise_rot}.
We find that all the diffusion-based algorithms are relatively robust to rotation measurement noise, with both the rotation and position errors rising steadily as rotation noise increases.

\begin{figure}
     \centering
     \begin{subfigure}[b]{0.4\linewidth}
         \centering
         \includegraphics[width=\linewidth]{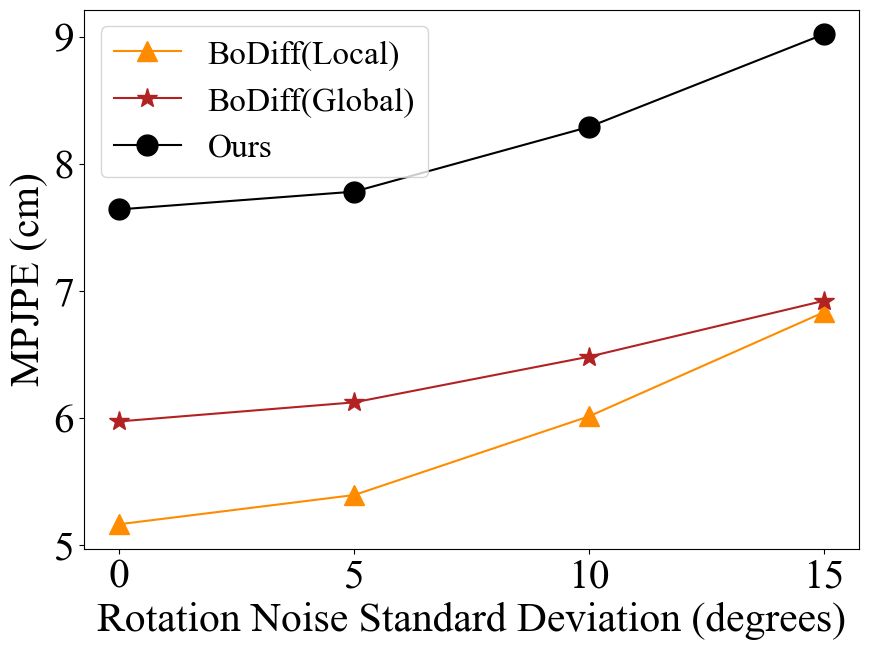}
         \caption{Position error vs rotation noise}
     \end{subfigure}
     \hfill
     \begin{subfigure}[b]{0.4\linewidth}
         \centering
         \includegraphics[width=\linewidth]{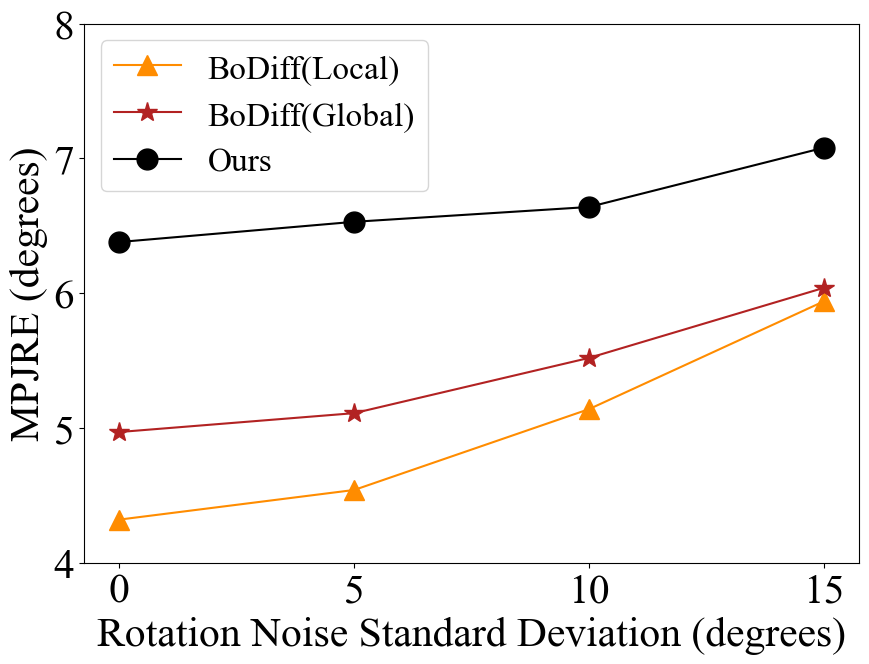}
         \caption{Rotation error vs rotation noise}
     \end{subfigure}
     \caption{Performance with additive white noise in rotation measurements.}
     \label{fig:noise_rot}
     \vspace{-0.1in}
\end{figure}

\textbf{Need for using location measurements:}
An important question is whether purely using the scale-free rotations \(r_{\measureset}\) is better than including the scale-dependent location measurements \(l_{\measureset}\).
Since \(r_{\measureset}\) is scale-free, we could potentially use only \(r_{\measureset}\) to estimate pose. 
As an ablation study, we compare BoDiffusion(Global) and \name, against BoDiffusion(Global) without using \(l_{\measureset}\) for CFG.
Since BoDiffusion(Global) was trained to accept only \(r_{\measureset}\) or both \(\{r_{\measureset},l_{\measureset}\}\) for CFG-based conditioning, this serves as an apples-to-apples comparison.

Table \ref{tab:l_cond} shows the results of this comparison using Protocol 1, using the default body shape to generate \(l_{\measureset}\).
BoDiffusion(Global) with \(l_{\measureset}\) for CFG performs the best amongst the three algorithms.
\name is next, performing much better than BoDiffusion(Global) without \(l_{\measureset}\), illustrating the importance of \(l_{\measureset}\) for pose estimation.

\begin{table}[!ht]
    \centering
    \caption{Comparison between \name, BoDiffusion(Global) and BoDiffusion(Global) with no \(l_{\measureset}\) as input for CFG using Protocol 1} 
    \begin{tabular}{l|lll}
    \toprule
    Metric & \name & BoDiffusion(Global) & BoDiffusion(Global) no \(l_{\measureset}\)\\\midrule
    MPJPE(cm) & 7.64 & \textbf{5.97} & 15.98\\  \midrule
    MPJRE(\degree) & 6.38 & \textbf{4.97} & 8.71 \\  \bottomrule
    \end{tabular}
    \label{tab:l_cond}
\end{table}

\textbf{Using Inverse-Guidance on Local Joint Angle Representation:}
Considering that the BoDiffusion(Local) model outperforms the BoDiffusion(Global) model, an important question is---why not use the local joint angle output \(\Theta_{\jointset}\) for inverse guidance?
In earlier comparisons, we saw that Local prediction achieves better lower-body (LPE) performance.

To compare Inverse Guidance performance for Local joint angle prediction with Global joint angle prediction, we fine-tune BoDiffusion(Local) in a manner similar to \name(head) as described in \ref{sec:implement}, such that it accepts head position along with all 3 rotation angles as input for the CFG score, but the hand positions are zeroed out.
We use a \(\Pi\)GDM-based inverse guidance after computing the global joint angles using the local angle output from the score model, which we term Local(\(\Pi\)GDM).
We also use an L2-based gradient descent method similar to that in \cite{omnicontrol} to perform inverse guidance using joint location.
We term this method Local(Gradient Descent) and Global(Gradient Descent) based on whether the score model predicts local or global joint angles, respectively.

\begin{table}[!ht]
    \centering
    \caption{Comparison with Inverse Guidance for Local Joint Angle Prediction using Protocol 1}
    \begin{tabular}{l|llll}
    \toprule
    Algorithm & MPJPE(cm) & MPJRE(\(\degree\)) & UPE(cm) & LPE(cm) \\
    \midrule
    Local(Gradient Descent) & \textbf{5.51} & 5.25 & 3.32 & \textbf{9.47} \\
    Local(\(\Pi\)GDM) & 5.82 & 5.16 & 2.48 & 11.27 \\
    Global(Gradient Descent) & 5.58 & 4.97 & 2.71 & 10.14 \\
    \name(head) & 5.85 & \textbf{4.69} & \textbf{2.47} & 11.34 \\
    \bottomrule
    \end{tabular}
    \label{tab:pigdm_local}
\end{table}

The results are summarized in Table \ref{tab:pigdm_local}.
The Local(Gradient Descent) is once again better than \name and its variants at predicting lower body motion, resulting in lower MPJPE.
However, this is at the cost of worse upper body performance and higher overall joint angle error.
\name, which uses \(\Pi\)GDM was specifically chosen in our work for zero-shot pose prediction because of better upper body inverse guidance, especially when head position isn't used for the CFG score model.

\textbf{Runtime Performance:}
We next test the runtime of each algorithm in Table \ref{tab:runtime}, reporting the number of samples estimated per second for each.
Each algorithm was tested on an RTX Titan. Given that we operate at 60 frames per second across all baselines, all algorithms can run in real time, although some latency is expected.

\begin{table}[!ht]
    \centering
    \caption{Runtime performance comparison between \name, BoDiffusion and AvatarJLM} 
    \begin{tabular}{l|lll}
    \toprule
    Algorithm & Runtime (samples /sec) \\\midrule
    AvatarJLM & \(\sim\) 102\\
    BoDiffusion & \(\sim\) 392\\ 
    \name & \(\sim\) 229\\  \bottomrule
    \end{tabular}
    \captionsetup{justification=centering}
    \label{tab:runtime}
\end{table}

\textbf{More Qualitative Results:}
We show another qualitative sample in Figures \ref{fig:Qual_3} and \ref{fig:Qual_4}, comparing \name with the baseline algorithms in a running pose.
Once again, we keep the same relative body shape and modify only the scale.
\name falls behind the baselines when it comes to estimating lower body pose, especially the feet.
However, it generalizes across all body scales tested, with error at the arms lower than that of the baseline algorithms.

Unfortunately, \name without head position as CFG input catastrophically fails in some scenarios when predicting lower body motion, as seen in Figure \ref{fig:Qual_5}.
But this is mitigated by using the head position as input to the CFG score model, as in \name(head).

\begin{figure}[t]
    \centering
    \includegraphics[width=0.9\linewidth]{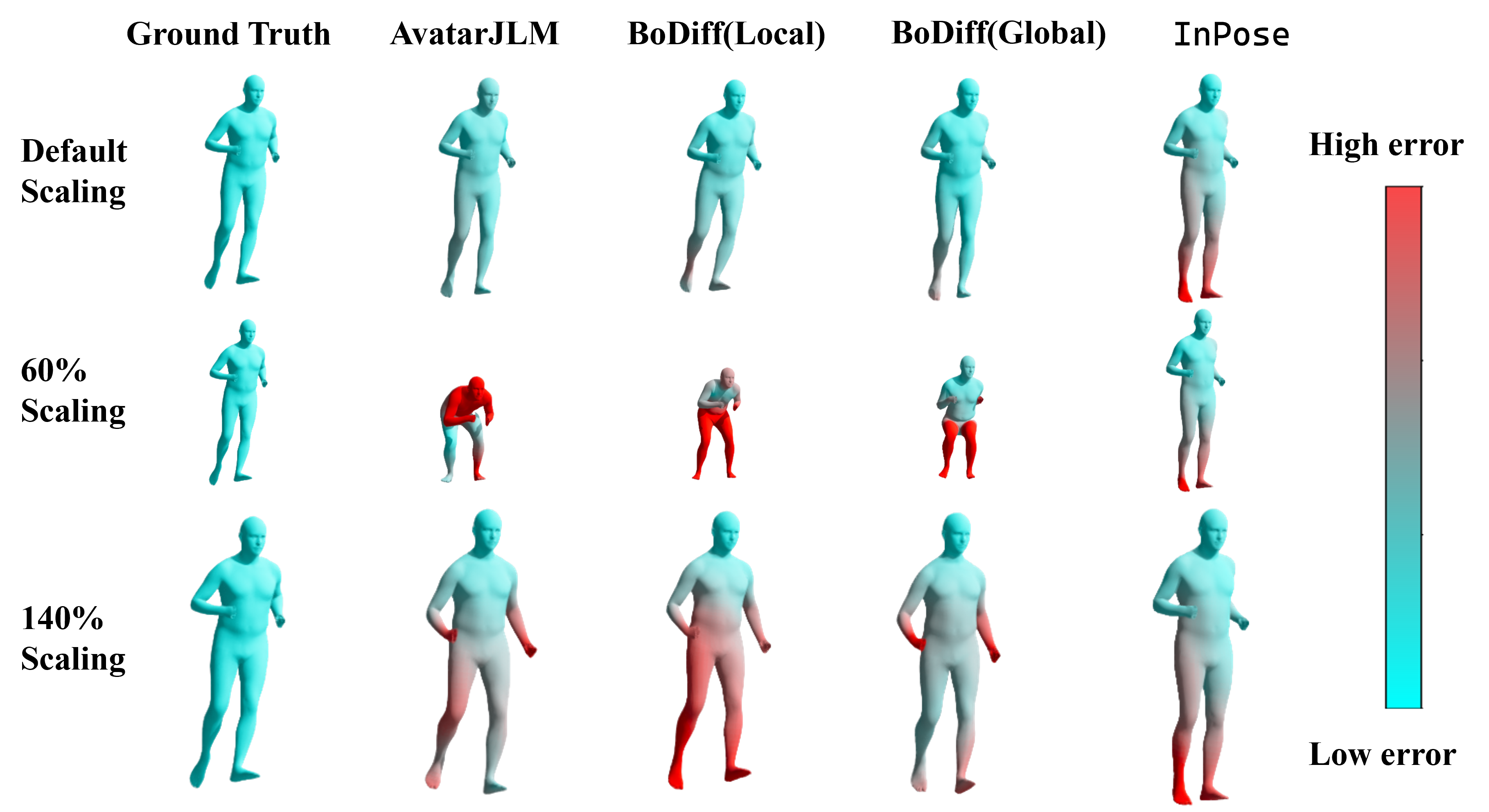}
    \caption{More qualitative results comparing \name with the Baselines with varying body scale. Relative body shape and pose have been kept constant.}
    \label{fig:Qual_3}
\end{figure}

\begin{figure}[t]
    \centering
    \includegraphics[width=0.95\linewidth]{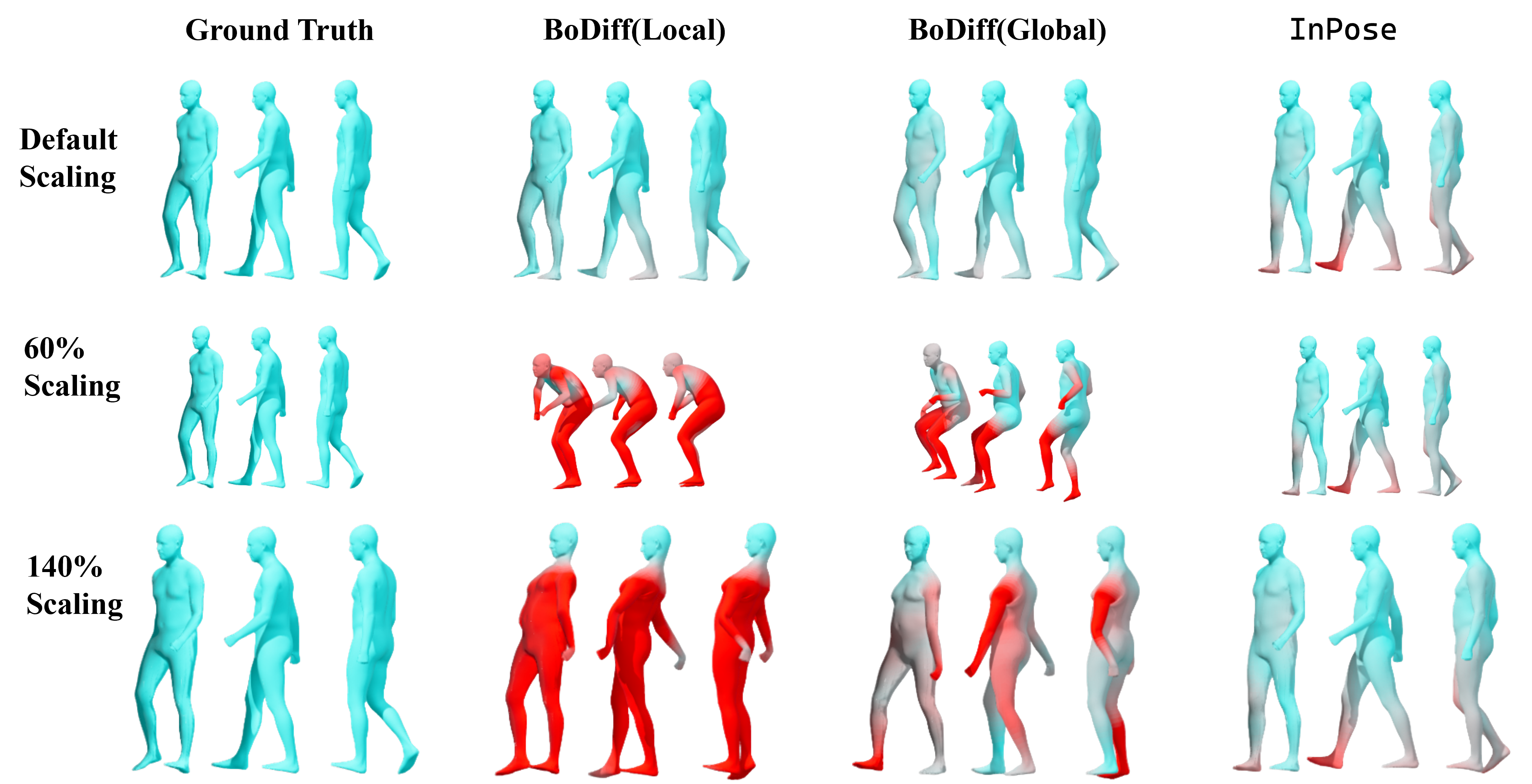}
    \caption{More scaling qualitative results comparing \name with the Diffusion-based Baselines with varying body scale. \name is able to infer lower body movement using the prior learnt from hand motion during walking.}
    \label{fig:Qual_4}
\end{figure}

\begin{figure}[t]
    \centering
    \includegraphics[width=1\linewidth]{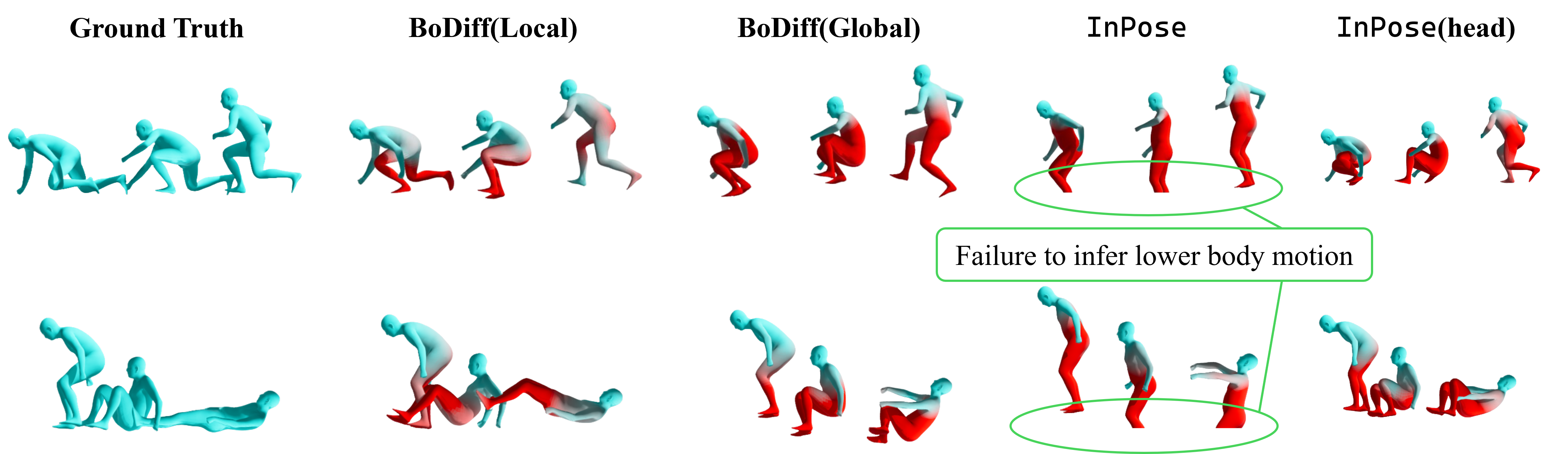}
    \caption{Catastrophic failure cases of \name. This occurs when the user gets extremely close to the ground. Without root translation information, \name fails to infer lower body motion. This is alleviated by using the head position as input for CFG, in \name(head).}
    \label{fig:Qual_5}
\end{figure}

\newpage

\end{document}